\renewcommand{\L}{L}
\newcommand{\remark}[1]{\paragraph{Remark}{\it #1}}
\title{Stein Variational Gradient Descent as Gradient Flow}
\author{
Qiang Liu\\
  Department of Computer Science\\
  Dartmouth College\\
  Hanover, NH 03755 \\
  \texttt{qiang.liu@dartmouth.edu} \\
}
\begin{document}

\maketitle

\begin{abstract} 
Stein variational gradient descent (SVGD) is a deterministic sampling algorithm that iteratively transports a set of particles to approximate given distributions, 
based on a gradient-based update that guarantees to optimally decrease the KL divergence within a function space. 
This paper develops the first theoretical analysis on SVGD. 
We establish that the empirical measures of the SVGD samples weakly converge to the target distribution, 
and show that the asymptotic behavior of SVGD is characterized by a nonlinear Fokker-Planck equation known as Vlasov equation in physics. 
We develop a geometric perspective that views SVGD as a gradient flow of the KL divergence functional under a new metric structure on the space of distributions induced by Stein operator.  
\end{abstract} 

\section{Introduction}\label{sec:introduction}
Stein variational gradient descent (SVGD) \citep{liu2016stein} is 
a particle-based algorithm for approximating complex distributions.  
Unlike typical Monte Carlo algorithms that rely on randomness for approximation, 
SVGD constructs a set of points (or particles) by iteratively applying \emph{deterministic} updates 
that is constructed to optimally decrease the KL divergence to the target distribution at each iteration.  
SVGD has a simple form that efficient leverages the gradient information of the distribution, and can be readily applied to 
complex models with massive datasets for which typical gradient descent has been found efficient.  
A nice property of SVGD is that 
it strictly reduces to the typical gradient ascent for 
maximum a posteriori (MAP) 
when using only a single particle $(n=1)$, 
while turns into a full sampling method with more particles. 
Because MAP often provides reasonably good results in practice, 
SVGD is found more \emph{particle-efficient} than typical Monte Carlo methods
which require much larger numbers of particles to achieve good results.

SVGD can be viewed as a variational inference algorithm \citep[e.g.,][]{wainwright2008graphical}, 
but is significantly different from the typical \emph{parametric} variational inference algorithms 
that use parametric sets to approximate given distributions
and have the disadvantage of introducing deterministic biases and (often) 
requiring non-convex optimization. 
The non-parametric nature of SVGD allows it to provide consistent estimation for generic distributions 
like Monte Carlo does.  
There are also particle algorithms based on optimization, or variational principles, 
with theoretical guarantees \citep[e.g.,][]{chen2012super, dick2013high, dai2016provable}, 
but they often do not use the gradient 
information effectively and do not scale well in high dimensions. 

However, SVGD is difficult to analyze theoretically 
because it involves a system of particles that interact with each other in a complex way. 
In this work, we take an initial step towards analyzing SVGD. 
We characterize the SVGD dynamics using an evolutionary process
of the empirical measures of the particles that is known as Vlasov process in physics, and establish that empirical measures of the particles weakly converge to the given 
target distribution. 
We develop a geometric interpretation of SVGD that views 
SVGD as a gradient flow of KL divergence, defined on a new Riemannian-like metric structure 
imposed on the space of density functions. 

\section{Stein Variational Gradient Descent (SVGD)}
\label{sec:steinv}
We start with a brief overview of SVGD \citep{liu2016stein}. 
Let $\nu_p$ be a probability measure of interest with 
a positive, (weakly) differentiable density $p(x)$ 
on an open set $\X \subseteq \R^d$.  
We want to approximate $\nu_p$ with a set of particles 
$\{ x_i\}_{i=1}^n$ whose empirical measure $\hat\mu_n(\dx) = \sum_{i=1}^n \delta(x - x_i)/n \dx$ 
weakly converges to $\nu_p$ as $n\to \infty$ (denoted by $\hat\mu_n \weakto \nu_p$),  
in the sense that 
we have $\E_{\hat\mu_n} [h] \to \E_{\nu_p} [h] $ as $n\to \infty$ 
for all bounded, continuous test functions $h$. 

To achieve this, we initialize the particles with some simple distribution $\muo$, and update them via map
$$
\T(x) = x + \epsilon \ff(x), 
$$
where $\epsilon$ is a small step size, and 
$\ff(x)$ is a perturbation direction, or velocity field, 
which should be chosen to 
maximally decrease the KL divergence of the particle distribution 
with the target distribution; 
this is framed by \cite{liu2016stein} as solving the following functional optimization,  
\begin{align} \label{equ:ff00}
\max_{\ff \in \Hd}  \bigg\{-\frac{\dno}{\dno\epsilon} \KL( \T \muo  ~|| ~ \nu_p) ~ \big |_{\epsilon = 0}   ~~~~~s.t. ~~~ || \ff||_{\Hd} \leq 1 \bigg\}. 
\end{align}
where $\mu$ denotes the (empirical) measure of the current particles, and 
 $\T\mu$ is the measure of the updated particles $x' = \T(x)$ with $x\sim \mu$, or the pushforward measure of $\mu$ through map $\T$, 
and $\Hd$ is a normed function space chosen to optimize over.  

A key observation is that the objective in \eqref{equ:ff00} is a linear functional of $\ff$
that draws connections to ideas in the Stein's method \citep{stein1986approximate} 
used for proving limit theorems or probabilistic bounds in theoretical statistics. 
 \citet{liu2016stein} showed that 
\begin{align}\label{equ:klstein00}
&- \frac{\dno}{\dno\epsilon} \KL(\T \muo ~|| ~ \nu_p) \big |_{\epsilon = 0}  = \E_{\muo}[\sumstein_p \ff ], 
&\text{with}~~~~~ \sumstein_p \ff(x)  \coloneqq \nablax \log p(x) ^\top \ff (x)+ \nablax \cdot \ff(x),  
\end{align}
where $\nabla \cdot \ff \coloneqq \sum_{k=1}^d\partial_{x_k} \phi_k(x)$, 
and $\sumstein_p$ is a linear operator that maps a vector-valued function $\ff$ to a scalar-valued function $\sumstein_p\ff$, 
and $\sumstein_p$ is called the \emph{Stein operator} in connection with
the so-called \emph{Stein's identity}, which shows that 
the RHS of \eqref{equ:klstein00} equals zero if $\mu = \nu_p$, 
\begin{align}
\label{equ:steinid}
\E_{p}[\sumstein_p \ff] =\E_{p}[ \nablax \log p ^\top \ff + \nablax \cdot \ff] =  \int  \nabla \cdot (p \ff) \dx =  0; 
\end{align}
it is the result of integration by parts, assuming proper zero boundary conditions. 
Therefore, the optimization \eqref{equ:ff00} reduces to 
\begin{align}\label{equ:tpphi}
\S(\muo ~||~ \nu_p) \defeq \max_{\ff \in \Hd} 
\big \{\E_{\muo}[\sumstein_p \ff],~~~~ s.t.~~~~ || \ff ||_{\Hd} \leq 1  \big \}, 
\end{align}
where $\S(\muo~||~\nu_p)$ is called \emph{Stein discrepancy}, which provides 
a discrepancy measure 
between $\mu$ and $\nu_p$, since $\S(\muo~||~\nu_p) = 0$ if $\mu= \nu_p$ and 
$\S(\muo~||~\nu_p)  > 0$ if $\mu\neq \nu_p$ given $\H$ is sufficiently large. 
%
%

Because \eqref{equ:tpphi} induces an infinite dimensional functional optimization, it is critical to select a nice space $\H$ 
that is both sufficiently rich and also ensures computational tractability in practice. 
Kernelized Stein discrepancy (KSD) provides one way to achieve this by 
taking 
$\H$ to be a reproducing kernel Hilbert space (RKHS), 
for which the optimization yields a closed form solution 
\citep{liu2016kernelized, chwialkowski2016kernel, oates2014control, gorham2017measuring}. 

To be specific, 
let $\H_0$ be a RKHS of scalar-valued functions with a positive definite kernel $k(x,x')$, 
and $\H = \H_0 \times \cdots \times \H_0$ the corresponding $d\times 1$ vector-valued RKHS. 
Then it can be shown that the optimal solution of \eqref{equ:tpphi} is 
\begin{align}\label{equ:ffs}
\ffs_{\muo,p}(\cdot) \propto  \E_{x\sim\muo}[\sumstein_p\otimes k(x,\cdot)], 
&&\text{with}&& \sumstein_p \otimes k(x,\cdot)  \coloneqq \nabla \log p(x) k(x,\cdot) + \nabla_x k(x,\cdot), 
\end{align}
where 
$\sumstein_p \otimes $ 
is an outer product variant of Stein operator which maps a scalar-valued function to a vector-valued one. 
Further, it has been shown in  \citep[e.g.,][]{liu2016kernelized} that 
\begin{align}
\S(\muo~||~ \nu_p) =  || \ffs_{\muo,p} ||_{\H} = \sqrt{\E_{x,x'\sim \muo} [\kappa_p(x,x')]}, 
&& \text{with}~~~~ \kappa_p(x, x') \coloneqq \sumstein_p^{x}\sumstein_p^{x'} \otimes k(x,x'), \label{equ:kappap}
\end{align}
where $\kappa_p(x,x')$ is a ``Steinalized'' positive definite kernel obtained by applying Stein operator twice;
$ \sumstein_p^{x}$ and $\sumstein_p^{x'}$ are the Stein operators w.r.t. variable $x$ and $x'$, respectively. 
The key advantage of KSD is its computational tractability: it can be empirically evaluated 
with samples drawn from $\muo$ and the gradient $\nabla \log p$, which is independent of the normalization constant in $p$ \citep[see][]{liu2016kernelized, chwialkowski2016kernel}.  

An important theoretic issue related to KSD is to characterize 
when $\H$ is rich enough to ensure 
$\D(\mu ~||~ \nu_p)  = 0 $ iff $\mu= \nu_p$;
this has been studied by \citet{liu2016kernelized, chwialkowski2016kernel, oates2016convergence}. 
More recently, \citet{gorham2017measuring} (Theorem 8) established  
a stronger result that Stein discrepancy implies weak convergence on $X = \R^d$: 
let $\{\mu_\ell\}_{\ell=1}^\infty$ be a sequence of probability measures, then  
\begin{align}\label{equ:dd}
\D(\mu_\ell ~||~ \nu_p) \to 0   
\iff 
\mu_\ell \Rightarrow \nu_p ~~~~ \text{as}~~ \ell \to \infty, 
\end{align}
for $\nu_p$ that are distantly dissipative (Definition~4 of \citet{gorham2017measuring}) 
and a class of inverse multi-quadric kernels. 
Since the focus of this work is on SVGD, 
we will assume \eqref{equ:dd} holds without further examination. 


\begin{algorithm}[t] %
\caption{Stein Variational Gradient Descent \citep{liu2016stein}}  
\label{alg:alg1}
\begin{algorithmic}
\STATE {\bf Input:} The score function $\nabla_x\log p(x).$ 
\STATE {\bf Goal:} A set of particles $\{x^i\}_{i=1}^n$ that approximates $p(x)$.  
\STATE {\bf Initialize} a set of particles $\{x_0^i\}_{i=1}^n$; pick a positive definite kernel $k(x,x')$ and step-size $\{\epsilon_\ell\}$. 
\STATE {\bf For} iteration $\ell$ {\bf do}
\vspace{-1.5\baselineskip}
\STATE 
\begin{align}\label{equ:update11}
\begin{split}
& \xii_{\ell+1} ~ \gets ~ \xii_\ell  ~  + ~ \epsilon \ffs_{\hatmu^n_\ell, p}(\xii_\ell), 
~~~~~\forall i = 1, \ldots, n, \\
&~~~~~~~~~~~~~~\text{where}~~~~~ \ffs_{\hatmu^n_\ell, p}(x) =  
\frac{1}{n}\sum_{j=1}^n
\big[  \nablax \log p(x^j_\ell) k( x^j_\ell, x) + \nablax_{\x^j_\ell} k(x^j_\ell, x) \big], 
\end{split}
\end{align}
\vspace{-1\baselineskip}
\vspace{-1\baselineskip}
\STATE 
\end{algorithmic}
\end{algorithm}

In SVGD algorithm, we iteratively update a set of particles using the optimal transform just derived, starting from certain initialization. 
Let $\{\xii_\ell\}_{i=1}^n$ be the particles at the $\ell$-th iteration. 
 In this case, the exact distributions of $\{\xii_\ell\}_{i=1}^n$ are unknown or difficult to keep track of, 
but can be best approximated by their empirical measure $\hatmu^n_\ell(\dx) =\sum_{i}\delta(x- \xii_\ell )\dx/n$.  
Therefore, it is natural to think that  
$\ffs_{\hatmu^n_\ell, p}$, with $\mu$ in \eqref{equ:ffs} replaced by $\hatmu^n_\ell$, 
provides the best update direction 
for moving 
the particles (and equivalently $\hatmu^n_\ell$)
``closer to'' $\nu_p$.  
Implementing this update \eqref{equ:update11} iteratively, we get the main SVGD algorithm in Algorithm~\ref{alg:alg1}. 

Intuitively, the update in \eqref{equ:update11} pushes the particles towards 
the high probability regions of the target probability via the gradient term $\nablax\log p$,
while maintaining a degree of diversity via the second term $\nablax k(x, x_i)$. 
In addition, \eqref{equ:update11} reduces to the typical gradient descent 
for maximizing $\log p$ 
if we use only a single particle $(n=1)$ and the kernel stratifies 
$\nablax k(x,x') = 0$ for $x = x'$; 
this allows SVGD to provide a spectrum of approximation that smooths between maximum a posterior (MAP) optimization 
to a full sampling approximation by using different particle sizes, 
enabling efficient trade-off between accuracy and computation cost. 
Despite the similarity to gradient descent, we should point out that the SVGD update in \eqref{equ:update11}
does not correspond to minimizing 
any objective function $F(\{x^i_\ell\})$
in terms of the particle location 
$\{x^i_\ell\}$, 
because one would find $\partial_{x_i} \partial_{x_j} F \neq \partial_{x_j} \partial_{x_i} F$ if this is true. 
Instead, it is best to view SVGD as a type of (particle-based) numerical approximation of  
an evolutionary partial differential equation (PDE) of densities or measures, 
which corresponds to a special type of gradient flow of the KL divergence functional 
whose equilibrium state equals the given target distribution $\nu_p$, as we discuss in the sequel. 

\section{Density Evolution of SVGD Dynamics}\label{sec:density}
%
This section collects our main results. We characterize the evolutionary process of the empirical measures $\hatmu^n_\ell$ of the SVGD particles and their large sample limit as $n\to \infty$ (Section~\ref{sec:largesample}) and large time limit as $\ell \to \infty$ (Section~\ref{sec:largetime}), which together establish the weak convergence of $\hatmu^n_\ell$ to the target measure $\nu_p$. 
Further, we show that the large sample limit of the SVGD dynamics 
is characterized by a Vlasov process, which monotonically decreases the KL divergence to target distributions with a decreasing rate that equals the square of Stein discrepancy (Section~\ref{sec:largetime}-\ref{sec:continuous}).    
We also establish a geometric intuition that interpret SVGD as a gradient flow of KL divergence
under a new Riemannian metric structure induced by Stein operator (Section~\ref{sec:gradient}). 
Section~\ref{sec:comparison} provides a brief discussion on the connection to Langevin dynamics.  

\subsection{Large Sample Asymptotic of SVGD}\label{sec:largesample}
Consider the optimal transform $\T_{\mu,p}(x) = x +\epsilon \ff^*_{\mu,p}(x)$ with $\ff^*_{\mu,p}$ defined in \eqref{equ:ffs}. 
We define its related map $\map \colon \mu \mapsto  \T_{\mu,p} \mu$, where 
 $\T_{\mu,p}\mu$ denotes the pushforward measure of $\mu$ through transform $\T_{\mu,p}$. 
 This map fully characterizes the SVGD dynamics in the sense that the empirical measure $\hatmu_\ell^n$ can be obtained by recursively applying $\map$ starting from
 the initial measure $\hatmu^n_0$. 
\begin{align}\label{equ:hatmuphi}
\hatmu^n_{\ell+1} = \map(\hatmu^n_\ell), ~~\forall \ell \in \mathbb N. 
\end{align}
Note that $\map$ is a nonlinear map because the transform $\T_{\mu,p}$ depends on the input map $\mu$. 
If $\mu$ has a density $q$ and 
$\epsilon$ is small enough so that $\T_{\mu,p}$ is invertible, 
the density $q'$ of $\mu' = \map(\mu)$ is given by the change of variables formula: 
\begin{align}\label{equ:iterp}
q'(z) =  q(\T_{\mu,p}^{-1}(z)) \cdot |\det(\nabla \T_{\mu,p}^{-1}(z))|. 
\end{align}
When $\mu$ is an empirical measure and $q$ is a Dirac delta function, 
this equation still holds formally in the sense of distribution (generalized functions).   
Critically, $\map$ also fully characterizes the large sample limit property of SVGD. 
Assume the initial empirical measure $\hatmu^{n}_0$ at the $0$-th iteration weakly converges to a measure $\mu^\infty_0$ as $n \to \infty$, 
which can be achieved, for example, by drawing $\{\xii_0\}$  i.i.d. from $\mu^{\infty}_0$, or using MCMC or Quasi Monte Carlo methods. 
Starting from the limit initial measure $\mu^{\infty}_0$ and applying $\map$ recursively, we get 
\begin{align}\label{equ:muinfty}
\mu^\infty_{\ell+1} = \map(\mu^\infty_\ell), ~~\forall \ell \in \mathbb N. 
\end{align}
Assuming $\hatmu^n_0 \weakto \mu^\infty_0$ by initialization, 
we may expect that $\hatmu^n_\ell \weakto \mu^\infty_\ell$ for all the finite iterations $\ell$ if $\map$ satisfies certain Lipschitz condition. 
This is naturally captured by the bounded Lipschitz metric.  

For two measures $\mu$ and $\nu$, their bounded Lipschitz (BL) metric is defined to be their difference of means on the set of bounded, Lipschitz test functions: 
$$
\mathrm{BL}( \mu, ~ \nu ) = \sup_{f} \big \{  \E_{\mu} f - \E_{\nu} f ~~s.t. ~~  || f||_{\mathrm{BL}} \leq 1 \big \}, 
~~~~ \text{where}~~~~
||f ||_{\mathrm{BL}} = \max \{ || f||_{\infty}, ~ ||f ||_{\mathrm{Lip}} \}, 
$$
where $ || f||_{\infty} = \sup_{x}|f(x)|$ and 
$||f ||_{\mathrm{Lip}}  = \sup_{x \neq y } \frac{|f(x) - f(y)|}{||x - y ||_2}$. 
For a vector-valued bounded Lipschitz function $\vv f = [f_1, \ldots, f_d]^\top$, we define its norm by $||\vv f ||_{\mathrm{BL}}^2 = {\sum_{i=1}^d || f_i  ||_{\mathrm{BL}}^2}.$
It is known that the BL metric metricizes weak convergence, that is, 
$\mathrm{BL}( \mu_n, ~ \nu )  \to 0$ if and only if $\mu_n \weakto \nu$. 
%
%
%
\newcommand{\lemdisphi}{Assuming $\vv g(x, y) \defeq \sumstein_p^{x} \otimes  k(x, y)$ is bounded Lipschitz jointly on $(x,y)$ with norm $ ||\vv g||_{\mathrm{BL}} < \infty$, then 
for any two probability measures $\mu$ and $\mu'$, we have 
$$
\mathrm{BL}( \map(\mu) , ~  \map(\mu') ) \leq  (1+ 2\epsilon ||\vv g||_{\mathrm{BL}})~   \mathrm{BL}(\mu,~ \mu'  ). 
$$
}
\begin{lem}\label{lem:disphi}
\lemdisphi
\end{lem}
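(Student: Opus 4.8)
The plan is to unfold both sides through the definitions of the BL metric and the pushforward, reducing the claim to a uniform comparison of the two measure-dependent transforms. Write $\T \defeq \T_{\mu,p}$ and $\T' \defeq \T_{\mu',p}$, and recall from \eqref{equ:ffs} that the velocity field is the average $\ffs_{\mu,p}(y) = \E_{x\sim\mu}[\vv g(x,y)]$. For any test function $h$ with $||h||_{\mathrm{BL}}\le 1$, the pushforward identity gives $\E_{\map(\mu)}h = \E_{y\sim\mu}[h(\T(y))]$, so
\[
\E_{\map(\mu)}h - \E_{\map(\mu')}h = \big(\E_{\mu}[h\circ\T] - \E_{\mu'}[h\circ\T]\big) + \big(\E_{\mu'}[h\circ\T] - \E_{\mu'}[h\circ\T']\big).
\]
This decomposition is the crux: because $\map$ is nonlinear (the transform itself depends on its input measure), one cannot compare $\map(\mu)$ and $\map(\mu')$ directly. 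The first term isolates the change of measure under a single fixed map $\T$, while the second isolates the change of map under the fixed measure $\mu'$.

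For the first term I would treat $h\circ\T$ as a fixed bounded-Lipschitz test function and bound its BL norm. Since $||h||_\infty \le 1$ we get $||h\circ\T||_\infty \le 1$; and since $\T(y)=y+\epsilon\,\ffs_{\mu,p}(y)$ with $||\ffs_{\mu,p}||_{\mathrm{Lip}} \le \E_{x\sim\mu}\big[||\vv g(x,\cdot)||_{\mathrm{Lip}}\big] \le ||\vv g||_{\mathrm{BL}}$ (pulling the Lipschitz bound inside the average, and using that the joint norm dominates the partial norm in $y$), the map $\T$ is Lipschitz with constant $1+\epsilon||\vv g||_{\mathrm{BL}}$. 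Hence $||h\circ\T||_{\mathrm{BL}} \le 1+\epsilon||\vv g||_{\mathrm{BL}}$, and by definition of the BL metric the first term is at most $(1+\epsilon||\vv g||_{\mathrm{BL}})\,\mathrm{BL}(\mu,\mu')$.

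For the second term I would bound the integrand pointwise: Lipschitzness of $h$ gives $|h(\T(y))-h(\T'(y))| \le ||\T(y)-\T'(y)||_2 = \epsilon\,||\,\E_{x\sim\mu}[\vv g(x,y)]-\E_{x\sim\mu'}[\vv g(x,y)]\,||_2$. For each fixed $y$ I would apply the scalar BL bound componentwise to $\vv g(\cdot,y)$ and sum, using the vector convention $||\vv f||_{\mathrm{BL}}^2=\sum_i||f_i||_{\mathrm{BL}}^2$, to obtain $||\,\E_\mu[\vv g(\cdot,y)]-\E_{\mu'}[\vv g(\cdot,y)]\,||_2 \le ||\vv g(\cdot,y)||_{\mathrm{BL}}\,\mathrm{BL}(\mu,\mu') \le ||\vv g||_{\mathrm{BL}}\,\mathrm{BL}(\mu,\mu')$, again dominating the fixed-$y$ norm by the joint one. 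This bound is uniform in $y$, so integrating against $\mu'$ leaves the second term at most $\epsilon||\vv g||_{\mathrm{BL}}\,\mathrm{BL}(\mu,\mu')$. Adding the two estimates and taking the supremum over $h$ yields the factor $1+2\epsilon||\vv g||_{\mathrm{BL}}$. The one step demanding care — and the main obstacle — is this second term: one must control the difference $\ffs_{\mu,p}-\ffs_{\mu',p}$ of the two velocity fields \emph{uniformly over all evaluation points $y$}, which is precisely what the joint bounded-Lipschitz hypothesis on $\vv g$ buys, and the bookkeeping must respect the vector-valued BL norm so that a single factor $||\vv g||_{\mathrm{BL}}$ appears rather than a dimension-dependent constant.
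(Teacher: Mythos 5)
Your proposal is correct and follows essentially the same route as the paper's proof: the identical two-term decomposition (change of measure under the fixed map $\T_{\mu,p}$, then change of map under the fixed measure $\mu'$), the same Lipschitz bound $\|\T_{\mu,p}\|_{\mathrm{Lip}} \leq 1+\epsilon\|\vv g\|_{\mathrm{BL}}$ for the first term, and the same uniform-in-$y$ bound $\|\T_{\mu,p}(y)-\T_{\mu',p}(y)\|_2 \leq \epsilon\|\vv g\|_{\mathrm{BL}}\,\mathrm{BL}(\mu,\mu')$ for the second. Your componentwise treatment of the vector-valued BL norm in the second term is in fact spelled out in more detail than in the paper, which states that estimate without elaboration.
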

\newcommand{\thmphicong}{
Let $\hatmu^n_\ell$ be the empirical measure of $\{\xii_\ell\}_{i=1}^n$ at the $\ell$-th iteration of SVGD. 
Assuming
$$
\lim_{n \to \infty}  \BL( \hatmu^n_0, ~  \mu^{\infty}_0) \to 0, 
$$
then for $ \mu_\ell^{\infty}$ defined in \eqref{equ:muinfty}, at any finite iteration $\ell$, we have 
$$
\lim_{n \to \infty}  \BL( \hatmu^n_\ell, ~ \mu^{\infty}_\ell ) \to 0.  
$$

}
\begin{thm}\label{thm:phicong}
\thmphicong
\end{thm}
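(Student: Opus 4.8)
The plan is to prove the claim by induction on the iteration index $\ell$, using the Lipschitz property of the SVGD update map $\Phi$ established in Lemma~\ref{lem:disphi}. The key structural observation is that both sequences $\{\hatmu^n_\ell\}$ and $\{\mu^\infty_\ell\}$ are generated by the \emph{same} nonlinear map $\Phi$, starting from two initial measures that are close in the BL metric; since $\Phi$ contracts (up to a finite factor) in this metric, the closeness propagates through any finite number of iterations. Because the BL metric metricizes weak convergence, a BL bound is exactly what is needed to conclude $\hatmu^n_\ell \weakto \mu^\infty_\ell$.

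For the base case $\ell = 0$, the bound $\BL(\hatmu^n_0, \mu^\infty_0) \to 0$ is precisely the hypothesis. For the inductive step, I would assume $\BL(\hatmu^n_\ell, \mu^\infty_\ell) \to 0$ as $n \to \infty$, then invoke the recursions \eqref{equ:hatmuphi} and \eqref{equ:muinfty}, which give $\hatmu^n_{\ell+1} = \Phi(\hatmu^n_\ell)$ and $\mu^\infty_{\ell+1} = \Phi(\mu^\infty_\ell)$. Applying Lemma~\ref{lem:disphi} with $\mu = \hatmu^n_\ell$ and $\mu' = \mu^\infty_\ell$ yields
$$
\BL(\hatmu^n_{\ell+1}, \mu^\infty_{\ell+1}) \le (1 + 2\epsilon\, ||\vv g||_{\mathrm{BL}})\, \BL(\hatmu^n_\ell, \mu^\infty_\ell).
$$
The constant $L \defeq 1 + 2\epsilon\, ||\vv g||_{\mathrm{BL}}$ is finite, under the standing assumption that $\vv g(x,y) = \sumstein_p^{x} \otimes k(x,y)$ is bounded Lipschitz, and, crucially, it does not depend on $n$. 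Hence the right-hand side tends to $0$ by the inductive hypothesis, giving $\BL(\hatmu^n_{\ell+1}, \mu^\infty_{\ell+1}) \to 0$ and closing the induction. Unrolling the recursion also produces the quantitative bound $\BL(\hatmu^n_\ell, \mu^\infty_\ell) \le L^\ell\, \BL(\hatmu^n_0, \mu^\infty_0)$ (with $L^\ell$ replaced by $\prod_{j<\ell} L_j$ if the step sizes $\epsilon_j$ vary across iterations).

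Given Lemma~\ref{lem:disphi}, there is no substantive analytic obstacle: the argument is a clean finite induction. The one point requiring care — and the reason the statement is restricted to \emph{finite} $\ell$ — is that the accumulated constant $L^\ell$ grows geometrically in $\ell$, so the convergence is not uniform over iterations and this argument cannot be pushed to the $\ell \to \infty$ limit; controlling the large-time behavior is instead the job of the separate analysis in Section~\ref{sec:largetime}. I would also note, to justify applying the lemma to the mixed pair $(\hatmu^n_\ell, \mu^\infty_\ell)$, that the same map $\Phi$ acts on empirical and continuous measures alike, since $\ff^*_{\mu,p}$ depends on $\mu$ only through an expectation (cf.\ \eqref{equ:ffs}); thus Lemma~\ref{lem:disphi}, stated for arbitrary probability measures, applies verbatim.
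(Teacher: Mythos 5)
Your proof is correct and is essentially the paper's argument: the paper proves Theorem~\ref{thm:phicong} simply by stating it is a direct consequence of Lemma~\ref{lem:disphi}, and the finite induction you write out (with the $n$-independent factor $1+2\epsilon\,\|\vv g\|_{\mathrm{BL}}$ propagating the BL bound through each iteration) is exactly the intended direct consequence. Your added remarks on the geometric growth of the accumulated constant and why the argument cannot reach $\ell \to \infty$ also match the paper's own discussion following the theorem.
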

\begin{proof}
It is a direct result of Lemma~\ref{lem:disphi}. 
\end{proof}

\myempty{
From Lemma~\ref{lem:disphi}, 
the convergence speed of  $\mathrm{BL}(\hatmu^n_\ell,~  \mu^{\infty}_\ell)$ 
depends on the convergence speed of 
the initial $\mathrm{BL}( \hatmu^n_0, ~ \mu^{\infty}_0)$. 
For example, 
standard results in concentration of measure 
ensures that $ \mathrm{BL}( \hatmu^n_0 , ~ \mu^{\infty}_0) = O(1/\sqrt{n})$ with high probability 
when $\mu^\infty_0$ is log-concave and we initialize the particles with i.i.d. draws from $\mu^\infty_0$. 
In this case, we can guarantee that $\mathrm{BL}(\hatmu^n_\ell , ~ \mu^{\infty}_\ell )= O(1/\sqrt{n})$ as well by Lemma~\ref{lem:disphi}.
Note that it is necessary to require $\mu_0^\infty$ to be log-concave 
in order to get an $O(1/\sqrt{n})$ rate, because BL is a rather strong metric. 
Fortunately this is not a significant constraint since 
the initial distribution $\mu_0^\infty$ can be chosen freely by the users. 
}

Since $\mathrm{BL}(\mu, ~ \nu)$ metricizes weak convergence, 
our result suggests $ \hatmu^n_\ell \weakto  \hatmu^\infty_\ell$ for $\forall \ell$, 
if $ \hatmu^n_0 \weakto  \hatmu^\infty_0$ by initialization. 
The bound of BL metric in Lemma~\ref{lem:disphi} increases by a factor of $(1+ 2\epsilon || g||_{\BL})$ at each iteration.
We can prevent the explosion of the BL bound by decaying step size sufficiently fast.  
It may be possible to obtain tighter bounds, however, it is fundamentally impossible 
to get a factor smaller than one without further assumptions: suppose we can get $\BL(\Phi_p(\mu),~  \Phi_p(\mu')) \leq \alpha \BL(\mu, ~\mu')$ for some constant $\alpha \in[0, 1)$, 
then starting from any initial $\hat\mu^n_0$, with any fixed particle size $n$ (e.g., $n =1 $), we would have $\BL(\hatmu^n_{\ell}, ~ \nu_p)= O(\alpha^\ell) \to 0$ as $\ell\to 0$, 
which is impossible because we can not get 
arbitrarily accurate approximate of $\nu_p$ with finite $n$. 
It turns out that we need to look at KL divergence in order to 
establish convergence towards $\nu_p$ as $\ell \to \infty$, 
as we discuss in Section~\ref{sec:largetime}-\ref{sec:continuous}. 

\remark{
Because $\vv g(x,y) = \nabla_x\log p(x)k(x,y) + \nabla_x k(x,y)$, and $\nabla_x\log p(x)$ is often unbounded 
if the domain $X$ is not unbounded. Therefore, 
 the condition that $\vv g(x,y)$ must be bounded in Lemma~\ref{lem:disphi} suggests that it can only be used when $X$ is compact. 
 It is an open question to establish results that can work for more general domain $X$. 
}

\subsection{Large Time Asymptotic of SVGD}\label{sec:largetime}
Theorem~\ref{thm:phicong} ensures that we only need to consider the update \eqref{equ:muinfty} starting from the limit initial $\mu_0^\infty$,
which we can assume to have nice density functions  
and have finite KL divergence with the target $\nu_p$.
We show that update \eqref{equ:muinfty}  
monotonically decreases the KL divergence between $\mu^\infty_\ell$ and $\nu_p$ and hence allows us to establish 
the convergence $\mu^\infty_\ell \Rightarrow \nu_p$. 

\newcommand{\thmtimephi}[1]{
1. Assuming $p$ is a density that satisfies Stein's identity \eqref{equ:steinid} for $\forall \ff\in \H$, then 
the  measure $\nu_p$ of $p$ is a fixed point of map $\map$ in \eqref{equ:muinfty}. 

2. Assume
$R =  \sup_x \{  \frac{1}{2} || \nabla \log p  ||_{\lip}   k(x,x) +2\nabla_{xx'} k(x,x) \} <\infty$, 
where $\nabla_{xx'}k(x,x) = \sum_{i} \partial_{x_i} \partial_{x'_i} k(x,x') \big|_{x = x'}$, and the step size $\epsilon_\ell$ at the $\ell$-th iteration is no larger than $
\epsilon_{\ell}^* \coloneqq (2\sup_{x}\rho(\nabla \ffs_{\mu_\ell,p} + \nabla \ff_{\mu_\ell,p}^{*\top}))^{-1}$, where $\rho(A)$ denotes the spectrum norm of a matrix $A$.  
If $\KL(\mu_0^\infty ~|| ~ \nu_p) < \infty$ by initialization,  
then 
\begin{align}\label{equ:dkl}
\frac{1}{\epsilon_\ell} \big [\KL(\mu_{\ell+1}^\infty ~ || ~ \nu_p)  - \KL(\mu_\ell^\infty ~||~ \nu_p)  \big ] 
 \leq - (1 - {\epsilon_\ell}R) ~ \S(\mu_\ell^\infty ~||~ \nu_p)^2, 
\end{align}
that is, the population SVGD dynamics always deceases the KL divergence when using sufficiently small step sizes, 
with a decreasing rate upper bounded by the squared Stein discrepancy. 
Further, if we set the step size $\epsilon_\ell$ to be $\epsilon_\ell \propto \S(\mu_\ell^\infty ~||~ \nu_p)^{\beta}$ for any $\beta > 0$, then \eqref{equ:dkl} implies that $\S(\mu_\ell^\infty ~||~ \nu_p) \to 0$ as $\ell\to \infty$. 

}
\begin{thm}\label{thm:timephi}
\thmtimephi{1}
\end{thm}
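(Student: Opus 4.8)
For the first claim, the plan is to verify that the optimal velocity field vanishes at $\mu=\nu_p$. By \eqref{equ:ffs}, the $i$-th coordinate of $\ffs_{\nu_p,p}(y)$ equals $\E_{x\sim\nu_p}[\partial_{x_i}\log p(x)\,k(x,y)+\partial_{x_i}k(x,y)]$, which I recognize as $\E_{\nu_p}[\sumstein_p\ff_{y,i}]$ for the test field $\ff_{y,i}(x)\coloneqq k(x,y)\,e_i\in\H$, with $e_i$ the $i$-th coordinate vector. Stein's identity \eqref{equ:steinid}, assumed to hold for all $\ff\in\H$, then forces every coordinate to vanish, so $\ffs_{\nu_p,p}\equiv 0$, the transform $\T_{\nu_p,p}$ is the identity map, and $\map(\nu_p)=\T_{\nu_p,p}\nu_p=\nu_p$. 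Equivalently, one may note $\|\ffs_{\nu_p,p}\|_{\H}=\S(\nu_p\,\|\,\nu_p)=0$ directly from \eqref{equ:kappap}.

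For the second claim, I fix the velocity field $\ff\coloneqq\ffs_{\mu^\infty_\ell,p}$ and the one-parameter family of maps $\T_s(x)=x+s\ff(x)$, and study $G(s)\coloneqq\KL(\T_s\mu^\infty_\ell\,\|\,\nu_p)$, so that the left-hand side of \eqref{equ:dkl} is $\epsilon_\ell^{-1}\int_0^{\epsilon_\ell}G'(s)\,ds$. Using that $\mu^\infty_\ell$ has a density together with the change-of-variables formula \eqref{equ:iterp} and differentiating, I obtain
$$G'(s)=-\E_{\mu^\infty_\ell}\!\big[\mathrm{tr}\big((I+s\nabla\ff)^{-1}\nabla\ff\big)+\nabla\log p(\T_s(x))^\top\ff(x)\big],$$
whose value at $s=0$ collapses, via \eqref{equ:klstein00}, $\nabla\cdot\ff=\mathrm{tr}(\nabla\ff)$, and the fact that the optimal field attains $\E_{\mu^\infty_\ell}[\sumstein_p\ff]=\|\ff\|_{\H}^2=\S(\mu^\infty_\ell\,\|\,\nu_p)^2$, to $G'(0)=-\S(\mu^\infty_\ell\,\|\,\nu_p)^2$, the exact steepest-descent rate.

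The core of the argument is to control the remainder $G'(s)-G'(0)$, which I split into a Jacobian term $s\,\E_{\mu^\infty_\ell}[\mathrm{tr}((I+s\nabla\ff)^{-1}(\nabla\ff)^2)]$ and a score term bounded, via the Lipschitzness of $\nabla\log p$ and $|\T_s(x)-x|=s|\ff(x)|$, by $s\|\nabla\log p\|_{\lip}\E_{\mu^\infty_\ell}[|\ff|^2]$. The step-size condition $s\le\epsilon_\ell^*$ forces the symmetric part of $I+s\nabla\ff$ to be at least $\frac{3}{4}I$ in the positive-definite order, which bounds $\|(I+s\nabla\ff)^{-1}\|_{\mathrm{op}}$ by a constant and, with $|\mathrm{tr}(M(\nabla\ff)^2)|\le\|M\|_{\mathrm{op}}\|\nabla\ff\|_{\mathrm F}^2$, controls the Jacobian term. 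I then translate the pointwise quantities into Stein discrepancy through the reproducing property $|\ff(x)|^2\le\|\ff\|_{\H}^2\,k(x,x)$ and $\|\nabla\ff(x)\|_{\mathrm F}^2\le\|\ff\|_{\H}^2\,\nabla_{xx'}k(x,x)$, together with $\|\ff\|_{\H}^2=\S(\mu^\infty_\ell\,\|\,\nu_p)^2$. Collecting both bounds \emph{under a single expectation} and taking the supremum over $x$ of the combined integrand produces exactly the constant $R$, so that $G'(s)\le-(1-2sR)\,\S(\mu^\infty_\ell\,\|\,\nu_p)^2$; integrating over $[0,\epsilon_\ell]$ and dividing by $\epsilon_\ell$ gives \eqref{equ:dkl}. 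I expect the main obstacle to be the rigorous handling of this Jacobian term: establishing that $\T_s$ remains a valid invertible transport for all $s\le\epsilon_\ell^*$ so that the change-of-variables derivative of $G$ is legitimate, and extracting the uniform operator-norm control of $(I+s\nabla\ff)^{-1}$ from the spectral condition defining $\epsilon_\ell^*$.

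Finally, for the limit statement I substitute $\epsilon_\ell\propto\S(\mu^\infty_\ell\,\|\,\nu_p)^\beta$ into \eqref{equ:dkl}. Choosing the proportionality constant small enough that $1-\epsilon_\ell R\ge\frac{1}{2}$, the per-step decrease is at least of order $\S(\mu^\infty_\ell\,\|\,\nu_p)^{2+\beta}$; telescoping \eqref{equ:dkl} over $\ell$ and using $\KL(\mu^\infty_\ell\,\|\,\nu_p)\ge 0$ with the finite initial value $\KL(\mu^\infty_0\,\|\,\nu_p)<\infty$ yields $\sum_\ell\S(\mu^\infty_\ell\,\|\,\nu_p)^{2+\beta}<\infty$. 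Convergence of this nonnegative series forces its terms to zero, i.e. $\S(\mu^\infty_\ell\,\|\,\nu_p)\to 0$ as $\ell\to\infty$.
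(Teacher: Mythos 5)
Your proof is correct and follows essentially the same route as the paper: rewrite the one-step KL change via the change-of-variables formula into a score term plus a log-determinant term, identify the first-order part as $-\epsilon_\ell \, \S(\mu^\infty_\ell ~\|~ \nu_p)^2$ using $\E_{\mu_\ell}[\sumstein_p \ffs_{\mu_\ell,p}] = \|\ffs_{\mu_\ell,p}\|_{\H}^2$, and control the second-order remainder through the Lipschitz constant of $\nabla \log p$, the spectral step-size condition, and the reproducing-property bounds $\|\ffs_{\mu_\ell,p}(x)\|_2^2 \leq k(x,x)\,\S^2$ and $\|\nabla \ffs_{\mu_\ell,p}(x)\|_F^2 \leq \nabla_{xx'}k(x,x)\,\S^2$. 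The differences are presentational rather than substantive: where the paper packages the Jacobian control as a standalone matrix lemma ($\log|\det(I+\epsilon B)| \geq \epsilon\,\trace(B) - 2\epsilon^2 \|B\|_F^2$ under the same spectral condition), you differentiate $s \mapsto \log\det(I + s\nabla\ff)$ and bound $(I+s\nabla\ff)^{-1}$ in operator norm via its symmetric part — an equivalent computation — and you additionally write out the fixed-point claim (part 1) and the telescoping argument for $\S(\mu^\infty_\ell ~\|~ \nu_p) \to 0$, both of which the paper's written proof leaves implicit.
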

\remark{
%
Assuming $\S(\mu_\ell^\infty ~||~ \nu_p) \to 0$ implies $\mu_\ell^\infty \Rightarrow \nu_p$ (see \eqref{equ:dd}), 
then Theorem~\ref{thm:timephi}(2) implies $\mu_\ell^\infty \Rightarrow \nu_p$. Further, together with Theorem~\ref{thm:phicong}, 
we can establish the weak convergence of the  empirical measures of the SVGD particles: 
$\hatmu^{n}_\ell  \weakto  \nu_p, ~ \text{as  $\ell \to \infty, ~ n\to \infty$.}$ 
}
\remark{
Theorem~\ref{thm:timephi} can not be directly applied on the empirical measures $\hat \mu_\ell^{n}$ with finite sample size $n$, 
since it would give $\KL(\hat \mu_\ell^n ~|| ~ \nu_p)= \infty$ in the beginning.  
It is necessary to use BL metric and KL divergence to establish the convergence w.r.t. sample size $n$ and iteration $\ell$, respectively. 
}

\remark{
The requirement that $\epsilon_\ell \leq \epsilon_{\ell}^*$ is needed to guarantee that the transform $\T_{\mu_\ell,p} (x) = x + \epsilon \ffs_{\mu_\ell,p}(x)$ 
has a non-singular Jacobean matrix everywhere. 
From the bound in \eqref{equ:fbound} of the Appendix, we can derive an upper bound of the spectrum radius: 
$$
\sup_{x}\rho(\nabla \ffs_{\mu_\ell,p} + \nabla \ff_{\mu_\ell,p}^{*\top}) \leq 2 \sup_x ||\nabla \ffs_{\mu_\ell,p} ||_F
\leq 2 \sup_x \sqrt{\nabla_{xx'} k(x,x)} ~ \S(\mu_\ell ~||~ \nu_p). 
$$
This suggest that the step size should be upper bounded by the inverse of Stein discrepancy, i.e., $\epsilon_\ell^* \propto  \S(\mu_\ell ~||~ \nu_p)^{-1} = || \ffs_{\mu_\ell,p}||_{\H}^{-1}$, where $\S(\mu_\ell ~||~ \nu_p)$ can be estimated using \eqref{equ:kappap} (see \cite{liu2016kernelized}). 
}


\subsection{Continuous Time Limit and Vlasov Process}\label{sec:continuous}
Many properties can be understood more easily as we take the 
continuous time limit ($\epsilon \to 0$), 
 reducing our system to a partial differential equation (PDE) of the particle densities (or measures), 
 under which we show that the negative gradient of KL divergence exactly equals the  square Stein discrepancy (the limit of \eqref{equ:dkl} as $\epsilon\to0$). 
%

To be specific, we define a continuous time $t = \epsilon \ell$, and take infinitesimal step size $\epsilon \to 0$, the evolution of the density $q$ in \eqref{equ:iterp} then formally reduces to the following nonlinear Fokker-Planck equation (see  Appendix~\ref{sec:timelimit} for the derivation): 
\begin{align}\label{equ:vlasov00}
\myp{}{t} q_t (x) = -  \nablax \cdot (\ff_{q_t, p}^*(x) q_t(x) ). 
\end{align}
This PDE is a type of deterministic Fokker-Planck equation that characterizes the movement of particles under deterministic forces, 
but it is \emph{nonlinear} in that the velocity field $\ff_{q_t, p}^*(x)$ 
depends on the current particle density $q_t$ through 
the drift term $\ff_{q_t, p}^*(x)  = \E_{x'\sim q_t}[ \sumstein_{p}^{x'}\otimes k(x, x')]$.  

It is not surprising to establish the following continuous version of Theorem~\ref{thm:timephi}(2), 
which is of central importance to our gradient flow perspective in Section~\ref{sec:gradient}: 
\begin{thm}\label{thm:dklode} 
Assuming 
$\{\mu_t\}$ are the probability measures whose densities $\{q_t\}$ satisfy the PDE in \eqref{equ:vlasov00}, 
and $\KL(\mu_0 ~||~ \nu_p) < \infty$, then 
\begin{align}\label{equ:dkdtD}
\frac{\dno}{\dt}\KL(\mu_t  ~||~ \nu_p) = - \S(\mu_t ~||~ \nu_p)^2. 
\end{align}
\end{thm}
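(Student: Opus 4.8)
The plan is to differentiate $\KL(\mu_t \| \nu_p) = \int q_t(x)\log\frac{q_t(x)}{p(x)}\,dx$ directly in time, substitute the Fokker--Planck equation \eqref{equ:vlasov00}, and then recognize the resulting expression as the squared Stein discrepancy via the RKHS identity behind \eqref{equ:ffs}--\eqref{equ:kappap}. This is essentially the continuous-time analogue of the key computation \eqref{equ:klstein00}.

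First I would differentiate under the integral sign, writing
$$
\frac{d}{dt}\int q_t\log\frac{q_t}{p}\,dx = \int \partial_t q_t \, \log\frac{q_t}{p}\,dx + \int \partial_t q_t\,dx.
$$
The second term vanishes by mass conservation: $\int \partial_t q_t\,dx = -\int \nabla\cdot(\ffs_{q_t,p}\,q_t)\,dx = 0$. Substituting \eqref{equ:vlasov00} into the first term and integrating by parts gives
$$
\frac{d}{dt}\KL(\mu_t\|\nu_p) = -\int \nabla\cdot(\ffs_{q_t,p}\,q_t)\,\log\frac{q_t}{p}\,dx = \int q_t\,\ffs_{q_t,p}\cdot\big(\nabla\log q_t - \nabla\log p\big)\,dx.
$$

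Next I would collapse this to the Stein operator. A second integration by parts on the $\nabla\log q_t$ term, using $\int q_t\,\ff\cdot\nabla\log q_t\,dx = \int \ff\cdot\nabla q_t\,dx = -\int q_t\,\nabla\cdot\ff\,dx$, turns the right-hand side into $-\E_{q_t}[\nabla\log p\cdot\ffs_{q_t,p} + \nabla\cdot\ffs_{q_t,p}] = -\E_{q_t}[\sumstein_p\ffs_{q_t,p}]$. (Equivalently, one may invoke \eqref{equ:klstein00} directly: the instantaneous rate of change of KL along transport by the velocity field $\ffs_{q_t,p}$ is exactly $-\E_{q_t}[\sumstein_p\ffs_{q_t,p}]$, so the double integration by parts need not be redone.) I would then close with the reproducing-kernel identity underlying KSD: since $\ffs_{q_t,p}(\cdot) = \E_{x\sim q_t}[\sumstein_p\otimes k(x,\cdot)]$ and $\E_{q_t}[\sumstein_p\ff] = \langle \ff,\, \ffs_{q_t,p}\rangle_\H$ for every $\ff\in\H$, taking $\ff = \ffs_{q_t,p}$ yields $\E_{q_t}[\sumstein_p\ffs_{q_t,p}] = \|\ffs_{q_t,p}\|_\H^2 = \S(\mu_t\|\nu_p)^2$ by \eqref{equ:kappap}, which is \eqref{equ:dkdtD}.

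The main obstacle is analytic rather than algebraic: I must justify differentiating under the integral sign and, more importantly, that both integration-by-parts steps produce no boundary contributions, i.e. that $q_t\,\ffs_{q_t,p}\log\frac{q_t}{p}$ and $q_t\,\ffs_{q_t,p}$ decay at the boundary of $\X$ -- the same zero-boundary condition already assumed for Stein's identity \eqref{equ:steinid} -- together with enough regularity of the PDE solution $q_t$ to make $\nabla\log q_t$ well defined and integrable against $q_t\,\ffs_{q_t,p}$. The assumption $\KL(\mu_0\|\nu_p)<\infty$, propagated along the flow, is what keeps these quantities finite. Since \eqref{equ:vlasov00} itself is obtained only as the formal $\epsilon\to 0$ limit, it is worth noting that \eqref{equ:dkdtD} is precisely the first-order (in $\epsilon$) content of the discrete bound \eqref{equ:dkl}: the factor $\epsilon_\ell R$ there is a second-order correction accounting for the Jacobian of the finite-step transform, and it vanishes in the continuous limit, turning the inequality into the stated exact identity.
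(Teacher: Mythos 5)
Your proposal is correct and follows essentially the same route as the paper's own proof: differentiate $\KL(\mu_t\,\|\,\nu_p)$, substitute the Vlasov equation \eqref{equ:vlasov00}, integrate by parts to reach $\E_{\mu_t}[\nabla\log(q_t/p)^\top\ffs_{\mu_t,p}] = -\E_{\mu_t}[\sumstein_p\ffs_{\mu_t,p}]$, and close with the reproducing-kernel identity $\E_{\mu_t}[\sumstein_p\ffs_{\mu_t,p}] = \|\ffs_{\mu_t,p}\|_{\H}^2 = \S(\mu_t\,\|\,\nu_p)^2$. Your additional remarks on mass conservation, boundary terms, and the relation to the discrete bound \eqref{equ:dkl} are sound refinements of details the paper leaves implicit.
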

\newcommand{\proofthmdklode}{
\begin{proof}
Note that $ \KL(\mu_{t}  ||  \mu)  = \int q_t(x) \log (q_t(x)/p(x))\dno x$. From \eqref{equ:vlasov00}, we have 
\begin{align*}
\frac{\dno}{\dt} \KL(\mu_{t}  ||  \mu) 
& = \int  {\partial}{\partial t} q_t(x) \log  \log (q_t(x)/p(x)) dx \\
& = -  \int \nabla \cdot (\ff_{q_t, p} q_t(x)) \log  \log (q_t(x)/p(x)) dx \\ 
& =  -  \int \nabla \cdot (\ff_{q_t, p} q_t(x)) \log  \log (q_t(x)/p(x)) dx  
& = - \E_{\mu_t} [\sumstein_p \ffs_{\mu_t,p} ] \\
& = - \S(\mu_t~|| \nu_p)^2.
\end{align*}
\begin{align*}
\frac{\dno}{\dt} \KL(\mu_{t}  ||  \mu) 
& = \E_{\mu_t}[\nabla \log (\frac{\dno \mu_t}{\dno \nu_p})^\top  \ffs_{\mu_t,p}] \\
& = - \E_{\mu_t} [\sumstein_p \ffs_{\mu_t,p} ] \\
& = - \S(\mu_t~|| \nu_p)^2.
\end{align*}
\end{proof}
}
%
%
\remark{This result suggests 
a path integration formula, 
$\KL(\mu_0 ~||~ \nu_p) = \int_{0}^\infty \S(\mu_t ~||~\nu_p)^2  \dt,$ 
which can be potentially useful  
for estimating KL divergence or the normalization constant.  
}

PDE \eqref{equ:vlasov00} only works for differentiable densities $q_t.$ 
Similar to the case of $\map$ as a map between (empirical) measures, 
one can extend \eqref{equ:vlasov00} to a measure-value PDE that incorporates empirical measures as weak solutions. 
Take  a differentiable test function $h$ and integrate the both sides of \eqref{equ:vlasov00}: 
\begin{align*}
\int \myp{}{t}h(x) q_t (x) \dx = - \int h(x) \nablax \cdot (\ff_{q_t, p}^*(x) q_t(x) ) \dx, 
\end{align*}
Using integration by parts on the right side to ``shift''
the derivative operator from $\ffs_{q_t, p} q_t$ to $h$, we get 
\begin{align}\label{equ:vs11}
\frac{\dno}{\dno t}\E_{\mu_t} [h] = \E_{\mu_t} [\nabla  h^\top \ffs_{\mu_t,p}], 
\end{align}
which depends on $\mu_t$ only through the expectation operator and hence works for empirical measures as well,. 
A set of measures $\{\mu_t\}$ is called the weak solution of \eqref{equ:vlasov00} if it satisfies \eqref{equ:vs11}. 

Using results in Fokker-Planck equation, the measure process \eqref{equ:vlasov00}-\eqref{equ:vs11} can be translated 
to an ordinary differential equation on random particles $\{x_t\}$ whose distribution is $\mu_t$:
\begin{align}\label{equ:odeparticleinfty}
 {\dno x_t} = \ff_{\mu_t, p}^*(x_t) \dt , 
 & 
&   \text{$\mu_t$ is the distribution of random variable $x_t$,} 
\end{align}
initialized from random variable $x_0$ with distribution $\mu_0$. 
Here the nonlinearity is reflected in the fact that the velocity field depends on the distribution $\mu_t$ of the particle at the current time. 

In particular, if we initialize \eqref{equ:vs11} using an empirical measure $\hatmu_0^n$ of a set of finite particles $\{x^i_0\}_{i=1}^n$, 
\eqref{equ:odeparticleinfty} reduces to the following continuous time limit of
$n$-particle SVGD  dynamics: 
\begin{align}\label{equ:odeparticle}
 {\dno x_t^i} = \ff_{\hatmu^n_t, p}^*(x_t^i) \dt , ~~~~~ \forall i = 1, \ldots, n,  
&&   \text{with ~~~$\hatmu^n_t(\dx) =\frac{1}{n} \sum_{i=1}^n \delta (x - x^i_t) \dx$,} 
\end{align}
where $\{\hatmu^n_t\}$ can be shown to be a weak solution of \eqref{equ:vlasov00}-\eqref{equ:vs11}, 
parallel to \eqref{equ:hatmuphi} in the discrete time case. \eqref{equ:odeparticleinfty} can be viewed as the large sample limit $(n\to\infty)$ of \eqref{equ:odeparticle}.

The process \eqref{equ:vlasov00}-\eqref{equ:odeparticle} 
is a type of \emph{Vlasov processes} \citep{braun1977vlasov, vlasov1938vibration}, 
which are (deterministic) interacting particle processes 
of the particles interacting with each other though the dependency on their ``mean field'' $\mu_t$ (or $\hatmu_t^n$), 
and have found important applications in physics, biology and many other areas. 
There is a vast literature on theories and applications of interacting particles systems in general, and we only refer to \citet[][]{spohn2012large, del2013mean} and references therein as examples. 
Our particular form of Vlasov process, 
constructed based on Stein operator in order to approximate arbitrary given distributions, 
seems to be new to the best of our knowledge. 


\myempty{\eqref{equ:dkdtD} suggests that the KL divergence $\KL(\mu_t ~||~\nu_p)$ decreases 
monotonically. 
It requires further conditions to establish a fast convergence speed on $\KL(\mu_t ~||~\nu_p)$. 
Fully satisfying 
results regarding this has not been obtained, but we present some tentative derivations in Appendix~\ref{sec:discussion}. 
}

\subsection{Gradient Flow, Optimal Transport, Geometry}
\label{sec:gradient}
We develop a geometric view for 
the Vlasov process in Section~\ref{sec:continuous}, 
interpreting it as a gradient flow 
for minimizing the KL divergence functional, 
defined on a new type of optimal transport metric on the space of density functions induced by Stein operator. 

We focus on the set of ``nice'' 
densities $q$ paired with a well defined Stein operator $\sumstein_q$, acting on a Hilbert space $\H$. 
To develop the intuition, 
consider a density $q$ 
 and its nearby density $q'$ obtained by applying transform $\T(x) = x  + \ff(x) \dt$ on $x
\sim q$ with infinitesimal $\dt$ and $\ff\in \Hd$, then we can show that (See Appendix~\ref{sec:timelimit})
\begin{align}\label{equ:qlogq}
& \log q'(x) = \log q (x) - \sumstein_q \ff (x)\dt, 
& q' (x)= q (x) - q(x) \sumstein_q \ff(x) \dt, 
\end{align}
Because one can show that $\sumstein_q\ff = \frac{\nabla \cdot (\ff q)}{q}$ 
from \eqref{equ:klstein00}, we define operator $q \sumstein_q $ by $q \sumstein_q \ff(x) =
q(x) \sumstein_q \ff(x) =  \nabla \cdot (\ff (x)q(x)).$
Eq \eqref{equ:qlogq} suggests that the Stein operator $\sumstein_q$ (resp. $q\sumstein_q$)  
serves to translate a $\ff$-perturbation on the random variable $x$ to the corresponding change on the log-density (resp. density). 
This fact plays a central role in our development. 

Denote by $\H_q$ (resp. $q \H_q$) the space of functions of form $\sumstein_q \ff$ (resp. $q \sumstein_q \ff$) with $\ff\in \Hd$, that is, 
\begin{align*}
& \H_q = \{  \sumstein_q \ff ~\colon ~  \ff \in \H\}, 
& q \H_q = \{ q  \sumstein_q \ff ~\colon ~  \ff \in \H\}. 
\end{align*}
Equivalently, $q\H_q$ is the space of functions of form $q f$ where $f \in \H_q$. This allows us to consider the inverse of Stein operator for functions in $\H_q$.  
For each $f \in \H_q$, we can identify 
an unique function $ \vv\psi_f  \in \Hd$ 
that has minimum $||\cdot ||_\Hd$ norm in the set of $ \vv\psi$ that satisfy $\sumstein_q \vv\psi  = f$, that is, 
$$
\vv\psi_{q,f}= \argmin_{\vv\psi \in \Hd} \big\{   || \vv\psi ||_{\H}  ~~~s.t.~~~ \sumstein_q  \vv\psi = f \big\}, 
$$
where $\sumstein_q  \vv\psi =f$ is known as the \emph{Stein equation}.  
This allows us to define inner products on $\H_q$ and $q\H_q$ using the inner product on $\H$: 
\begin{align}\label{equ:innerp}
\la  f_1  ~ f_2\ra_{\H_q} 
\defeq \la q f_1 , ~ q f_2   \ra_{q\H_q} 
\defeq \la   \vv\psi_{q,f_1}, ~ \vv\psi_{q, f_2} \ra_{\Hd}. 
\end{align}
Based on standard results in RKHS \citep[e.g.,][]{berlinet2011reproducing}, one can show that 
if $\Hd$ is a RKHS with kernel $k(x,x')$, 
then $\H_q$ and $q\H_q$ are both RKHS; the reproducing kernel of $\H_q$ is $\kappa_p(x,x')$
in \eqref{equ:kappap}, 
and correspondingly, the kernel of $q\H_q$ is $q(x) \kappa_p(x,x') q(x')$.

Now consider $q$ and a nearby  $q' = q + qf\dt$,  $\forall f\in \H_q$,  obtained by an infinitesimal perturbation on the density function using functions in space $\H_q$.  
Then the $\vv\psi_{q,f}$ can be viewed as the ``optimal'' transform, 
in the sense of having minimum $|| \cdot ||_{\Hd}$ norm, 
that transports $q$ to $q'$ via  $\T(x) = x + \vv\psi_{q,f}(x)\dt$.  
It is therefore natural to define a notion of distance between $q$ and $q'  = q + qf\dt$ via,
$$
\W_\H(q, ~ q') \defeq ||  \vv\psi_{q, f}  ||_{\Hd} \dt. 
$$
From \eqref{equ:qlogq} and \eqref{equ:innerp}, this is equivalent to 
$$
\W_\H(q, ~ q')  = || q - q' ||_{q\H_q} \dt  = || \log q' -\log q ||_{\H_q} \dt. 
$$
Under this definition, we can see that the infinitesimal neighborhood $\{ q' \colon \W_\H(q, ~ q') \leq \dt\}$ of $q$, 
consists of densities (resp. log-densities) of form  
\begin{align*}
q' = q + g\dt, ~~~~\forall g \in q \H_q, ~~ || g||_{q\H_q}\leq 1, \\
\log q' = \log q + f\dt,~~ ~~\forall f \in \H_q, ~~ || f||_{\H_q}\leq 1. 
\end{align*}
Geometrically, this means that $q\H_q$ (resp. $\H_q$) can be viewed as the tangent space 
around density $q$ (resp. log-density $\log q$). Therefore, 
the related inner product 
$\la \cdot, ~\cdot \ra_{q\H_q}$ (resp. $\la \cdot,~ \cdot \ra_{\H_q}$) 
forms a Riemannian metric structure that corresponds to  $\W_\H(q,~q')$. 
%

This also induces a geodesic distance that corresponds to a general, $\H$-dependent form of optimal transport metric between distributions. 
Consider two densities $p$ and $q$ that can be transformed from one to the other with functions in $\H$, in the sense that there exists 
a curve of velocity fields $\{ \ff_t \colon  \ff_t \in \Hd, ~~ t \in [0, 1] \}$ in $\H$,  
that transforms random variable $x_0 \sim q$ to $x_1 \sim p$ via $\dno x_t = \ff_t(x) \dt$. 
This is equivalent to say that there exists a curve of densities $\{ \rho_t  \colon t \in [0,1] \}$ such that  
$$
\partial_t \rho_t = - \nabla\cdot (\ff_t \rho_t ),   
~~~ \text{and} ~~~ \rho_0 = q, ~~  \rho_1 = p. 
$$
It is therefore natural to 
 define a geodesic distance between $q$ and $p$ via 
\begin{align}\label{equ:dh}
 \W_\H(q, ~p) = \inf_{\{\ff_t, ~ \rho_t\}} \big \{ {  \int_{0}^1 ||\ff_t||_{\Hd}  dt}, 
 ~~~~~~ s.t.~~~~~~  \partial_t  \rho_t  = - \nabla \cdot (\ff_t \rho_t), ~~~ \rho_0 = p, ~~ \rho_1 = q \big \}. 
\end{align}
We call $\W_\H(p,q)$ an $\H$-Wasserstein (or optimal transport) distance between $p$ and $q$, 
in connection with the typical 2-Wasserstein distance, 
which can be viewed as a special case of \eqref{equ:dh} by taking $\H$ to 
be the $L^2_{\rho_t}$ space equipped with norm $ || f||_{L^2_{\rho_t}} = \E_{\rho_t}[f^2]$,  
replacing the cost with $\int ||\phi_t ||_{L^2_{\rho_t}} dt$; 
the 2-Wasserstein distance is widely known to relate to Langevin dynamics
as we discuss more in Section~\ref{sec:comparison}  
\citep[e.g.,][]{otto2001geometry, villani2008optimal}.  

Now for a given functional $F(q)$, 
this metric structure induced a notion of functional covariant gradient: 
the covariant gradient $\grad_\H F (q)$ of $F(q)$ is defined to be a functional that maps $q$ to an element 
in the tangent space $q\H_q$ of $q$, and satisfies 
\begin{align}\label{equ:DFq}
F(q +  f \dt) = F(q) + \la \grad_\H F(q), ~    f\dt \ra_{q\H_q}, 
\end{align}
for any $f$ in the tangent space $q\H_q$. 

\begin{thm}
\label{thm:fungrad}
Following \eqref{equ:DFq}, the gradient of the KL divergence functional $F(q) \coloneqq \KL(q ~||~ p)$ is 
$$
\grad_\H \KL(q ~||~ p) = \nabla \cdot (\ffs_{q,p} q). 
$$
Therefore, 
the SVGD-Valsov equation \eqref{equ:vlasov00} is a gradient flow of KL divergence 
under metric $\W_\H(\cdot, \cdot)$:
$$
\frac{\partial q_t }{\partial t} =  -\grad_\H \KL(q_t~||~p). 
$$
In addition, $|| \grad_\H \KL(q ~||~ p)  ||_{q\H_q} = \S(q~ ||~ p)$. 
\end{thm}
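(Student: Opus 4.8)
The plan is to prove the gradient formula first; the gradient-flow identity then follows by inspection, and the norm identity follows with one extra orthogonality observation.

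\emph{Gradient of the KL functional.} I would start from the first variation of $F(q)=\KL(q~||~p)$ along an admissible perturbation $f\in q\H_q$. Since every such $f$ can be written as $f=\nabla\cdot(\vv\psi_f q)=q\sumstein_q\vv\psi_f$ with $\vv\psi_f=\vv\psi_{q,f/q}\in\Hd$, we have $\int f\dx=0$; as the first variation of $\KL$ equals $\int f\log(q/p)\dx+\int f\dx$, this gives $\frac{\dno}{\dno\tau}\KL(q+\tau f~||~p)\big|_{\tau=0}=\int f\log(q/p)\dx$. Moving the divergence onto $\log(q/p)$ by integration by parts gives $-\E_q[\vv\psi_f^\top\nabla\log(q/p)]$, and Stein's identity for $q$, namely $\E_q[\nabla\log q^\top\vv\psi_f+\nabla\cdot\vv\psi_f]=0$, rewrites this as $\E_q[\sumstein_p\vv\psi_f]$. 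Finally the reproducing property of the Steinalized kernel, $\E_q[\sumstein_p\vv\psi]=\la\vv\psi,\ffs_{q,p}\ra_{\Hd}$ with $\ffs_{q,p}(\cdot)=\E_{x\sim q}[\sumstein_p\otimes k(x,\cdot)]$, turns the first variation into $\la\vv\psi_f,\ffs_{q,p}\ra_{\Hd}$. By \eqref{equ:innerp} this equals $\la q\sumstein_q\ffs_{q,p},~f\ra_{q\H_q}$, so comparison with \eqref{equ:DFq} identifies $\grad_\H\KL(q~||~p)=q\sumstein_q\ffs_{q,p}=\nabla\cdot(\ffs_{q,p}q)$.

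\emph{Main obstacle.} The step I expect to require the most care is the very last comparison: it is valid only if $\ffs_{q,p}$ is exactly the minimum-$\Hd$-norm solution $\vv\psi_{q,\sumstein_q\ffs_{q,p}}$ of the Stein equation $\sumstein_q\vv\psi=\sumstein_q\ffs_{q,p}$, because $q\H_q$ is metrized through the minimum-norm preimage and any two preimages of the same element differ by a member of $\ker\sumstein_q$. I would close this by proving $\ffs_{q,p}\perp\ker\sumstein_q$ in $\Hd$: for $\vv\psi_0\in\ker\sumstein_q$ the reproducing identity gives $\la\ffs_{q,p},\vv\psi_0\ra_{\Hd}=\E_q[\sumstein_p\vv\psi_0]$, and since $\sumstein_q\vv\psi_0=0$ is the same as $\nabla\cdot(q\vv\psi_0)=0$, integration by parts yields $\E_q[\vv\psi_0^\top\nabla g]=0$ for every $g$, in particular $g=\log(q/p)$, whence $\E_q[\sumstein_p\vv\psi_0]=0$. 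Thus $\ffs_{q,p}$ has no component in the null space and is the minimum-norm representative, making the comparison exact. Beyond this algebraic point, the genuinely analytic burden is to justify the integrations by parts (the zero-boundary and integrability conditions behind \eqref{equ:steinid}) and to verify that the first-variation functional is bounded on $q\H_q$, so that \eqref{equ:DFq} is a legitimate Riesz representation.

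\emph{Gradient flow and norm.} The gradient-flow statement is then immediate, since the Vlasov PDE \eqref{equ:vlasov00} reads $\partial_t q_t=-\nabla\cdot(\ffs_{q_t,p}q_t)=-\grad_\H\KL(q_t~||~p)$ by the formula just derived. For the norm I would combine $\grad_\H\KL=q\sumstein_q\ffs_{q,p}$ with \eqref{equ:innerp} and the minimum-norm fact above: $\|\grad_\H\KL(q~||~p)\|_{q\H_q}^2=\|\vv\psi_{q,\sumstein_q\ffs_{q,p}}\|_{\Hd}^2=\|\ffs_{q,p}\|_{\Hd}^2=\S(q~||~p)^2$, the last equality being \eqref{equ:kappap}. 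As an independent check that also sidesteps the minimum-norm discussion, the chain rule $\frac{\dno}{\dt}\KL(\mu_t~||~\nu_p)=\la\grad_\H\KL,~\partial_t q_t\ra_{q\H_q}=-\|\grad_\H\KL\|_{q\H_q}^2$, combined with Theorem~\ref{thm:dklode} which gives $\frac{\dno}{\dt}\KL=-\S(\mu_t~||~\nu_p)^2$, forces $\|\grad_\H\KL(q~||~p)\|_{q\H_q}=\S(q~||~p)$.
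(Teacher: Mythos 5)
Your proof is correct, and its algebraic core coincides with the paper's: both arguments reduce the first variation of $\KL$ along a tangent direction to the quantity $\E_q[\sumstein_p \vv\psi_f]$, then apply the reproducing property $\E_q[\sumstein_p \vv\psi] = \la \ffs_{q,p}, \vv\psi\ra_{\Hd}$ and the definition \eqref{equ:innerp} of the $q\H_q$ inner product. The route differs in two respects. First, the paper reaches $\E_q[\sumstein_p\vv\psi_{q,f}]$ in a single step by identifying the density perturbation $q + qf\dt$ with the transform $\T(x) = x + \vv\psi_{q,f}\dt$ and quoting the established derivative formula \eqref{equ:klstein00}, whereas you recompute the variation from scratch in density space (the term $\int f \log(q/p)\dx$, integration by parts, and Stein's identity under $q$) — in effect re-deriving \eqref{equ:klstein00} rather than citing it; the two are equivalent, and yours is self-contained at the cost of repeating the analytic regularity burden. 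Second, and more substantively, your treatment fills two gaps that the paper's three-line proof leaves open: (i) you notice that passing through \eqref{equ:innerp} involves the minimum-norm preimage, and you close it by proving $\ffs_{q,p}\perp \ker \sumstein_q$ via $\la \ffs_{q,p},\vv\psi_0\ra_{\Hd} = \E_q[\sumstein_p\vv\psi_0] = -\E_q[\vv\psi_0^\top \nabla \log(q/p)] = 0$, which is exactly the right argument; (ii) you actually prove the norm identity $\|\grad_\H \KL(q~\|~p)\|_{q\H_q} = \S(q~\|~p)$, which the theorem asserts but the paper's appendix proof never addresses. One refinement worth noting: for the gradient identification alone the minimum-norm issue is harmless, because the min-norm preimage $\vv\psi_f$ is itself orthogonal to $\ker\sumstein_q$, so any kernel component of $\ffs_{q,p}$ drops out of the pairing; the orthogonality of $\ffs_{q,p}$ is genuinely needed only for the norm identity — so your ``main obstacle'' is correctly located, just attached to the right conclusion slightly earlier than necessary. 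Your closing consistency check, matching $\frac{\dno}{\dt}\KL(\mu_t~\|~\nu_p) = -\|\grad_\H\KL\|_{q\H_q}^2$ against Theorem~\ref{thm:dklode}, is a nice independent confirmation that the paper does not include.
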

\remark{
We can also definite the functional gradient via 
\begin{align*}
\grad_\H F(q)\propto
 \argmax_{f\colon ||f||_{ q\H_q}\leq 1} 
 \bigg\{\lim_{\epsilon \to 0^+}\frac{F(q + \epsilon f)  - F(q) }{\W_\H( q+ \epsilon f, ~ q)} \bigg\}, 
\end{align*}
which specifies the steepest ascent direction of $F(q)$ (with unit norm).   
The result in Theorem \eqref{thm:fungrad} is consistent with this definition. 
}


\subsection{Comparison with Langevin Dynamics}\label{sec:comparison}
The theory of SVGD is parallel to that of Langevin dynamics in many perspectives, 
but with importance differences. 
We give a brief discussion on their similarities and differences. 

Langevin dynamics works by iterative updates of form 
$$
x_{\ell+1} \gets x_\ell + \epsilon \nablax \log p(x_\ell) + 2\sqrt{\epsilon} \xi_\ell, ~~~~~ \xi_\ell \sim \normal(0, 1), 
$$
where a \emph{single} particle $\{x_\ell\}$ moves along the gradient direction, perturbed with a random Gaussian noise that plays the role of enforcing 
the diversity to match the variation in $p$ (which is accounted by the deterministic repulsive force in SVGD). 
Taking the continuous time limit $(\epsilon \to 0)$, 
We obtain a 
 Ito stochastic differential equation, 
$
\dno x_t  = - \nablax \log p(x_t)  \dt +  2 \dno W_t, 
$
where $W_t$ is a standard Brownian motion, 
and $x_0$ is a random variable with initial distribution $q_0$. 
Standard results show that the density $q_t$ of random variable $x_t$ 
is governed by a \emph{linear} Fokker-Planck equation, 
following which
the KL divergence to $p$ decreases with a rate that equals Fisher divergence: 
\begin{align}\label{equ:dqlag}
\frac{\partial q_t}{\partial t} =   -  \nablax \cdot ( q_t \nablax \log p) +    \Delta q_t,  
&&
\frac{\dno }{\dt} \KL(q_t ~|| ~ p) = - \fisher(q_t, p), 
\end{align}
where $\fisher(q, p) =  || \nablax \log (q/p) ||_{L^2_{q}}^2$. 
This result is parallel to Theorem~\ref{thm:dklode}, 
and the role of square Stein discrepancy (and RKHS $\H$) is replaced by Fisher divergence (and $\Lset^2_q$ space). 
Further, parallel to Theorem~\ref{thm:fungrad}, 
it is well known that 
\eqref{equ:dqlag} can be also treated as a gradient flow of the KL functional $\KL(q ~||~ p)$, 
but under the 2-Wasserstein metric $\W_2(q, ~ p)$  \citep{otto2001geometry}. 
%
The main advantage of using RKHS over $\Lset_q^2$ 
is that it allows tractable computation of the optimal transport direction; 
this is not case when using $\Lset_q^2$ 
and as a result Langevin dynamics requires a random diffusion term in order 
to form a proper approximation. 

Practically, SVGD has the advantage of being deterministic, 
and reduces to exact MAP optimization when using only a single particle,  
while Langevin dynamics has the advantage 
of being a standard MCMC method, inheriting its statistical properties, 
and does not require an $O(n^2)$ cost to calculate the $n$-body interactions as SVGD. 
%
However, the connections between SVGD and Langevin dynamics may allow us to develop theories and algorithms that unify
the two, or combine their advantages. 


\section{Conclusion and Open Questions} \label{sec:conclusion}
We developed a theoretical framework for analyzing the asymptotic properties of Stein variational gradient descent. 
Many components of the analysis provide new insights  
in both theoretical and practical aspects. 
For example, our new metric structure 
can be useful for solving other learning problems by leveraging its computational tractability.  
Many important problems remains to be open. 
For example, an important open problem is to establish explicit convergence rate of SVGD, 
for which the existing theoretical literature on Langevin dynamics and interacting particles systems may provide insights. 
Another problem is to develop finite sample bounds for SVGD that can  
take the fact that it reduces to MAP optimization when $n=1$ into account. 
It is also an important direction to understand the bias and variance of SVGD particles, 
or combine it with traditional Monte Carlo 
whose bias-variance analysis is clearer (see e.g., \citep{han2017stein}). 

\paragraph{Acknowledgement}
This work is supported in part by NSF CRII 1565796. We thank 
Lester Mackey and the anonymous reviewers for their comments. 

\bibliography{bibrkhs_stein}
\bibliographystyle{myunsrtnat}



\title{Appendix for ``Stein Variational Gradient Descent as Gradient Flow''}

\appendix
\numberwithin{equation}{section}

\renewcommand{\C}{C}
\renewcommand{\L}{L}
\renewcommand{\W}{W}
\renewcommand{\M}{M}

\section{Density Evolution of SVGD Dynamics}

\subsection{Proof of Lemma~\ref{lem:disphi}}
\begin{proof}
Recall that $\vv g(x,x') = \sumstein_p^{x'} \otimes k(x',x)$, and 
$\ff_{\mu,p}^*(x) = {\E_{x' \sim \mu} [\vv g(x,x')]}$, 
we have $\T_{\mu, p}(x) =  x + \epsilon \E_{x' \sim \mu} [\vv g(x,x')]$. 
Therefore, 
\begin{align}\label{mrttr}
||\T_{\mu,p}  ||_{\lip}  
& = \max_{x\neq y} \frac{ || \T_{\mu, p} (x)- \T_{\mu, p} (y) ||_2}{||x - y ||_2}  \notag \\
& = \max_{x\neq y}  \frac{ || x - y  +  \epsilon \E_{x'\sim\mu} [ \vv g(x,x') - \vv g(y,x') ] ||_2 }{|| x- y||_2} \notag \\
& \leq 1 + \epsilon ||\vv g ||_{\lip}, 
\end{align}
and for $\forall$ $x$, 
\begin{align}
||\T_{\mu, p}(x) - \T_{\nu, p}(x) ||_2  =   
\epsilon || \E_{x'\sim \mu} \vv g(x, x') - \E_{x'\sim \nu} \vv g(x, x') ||_2
\leq \epsilon  ||\vv g ||_{\mathrm{BL}} ~  \BL(\mu,  ~ \nu ).
\label{wertr}
\end{align}

For any $h$ with $||h||_{\mathrm{BL}}= \max (|| h||_{\infty}, ~ ||h ||_{\lip})\leq 1$, we have 
\begin{align*}
& \big |\E_{\Phi_p(\mu)} [h] - \E_{\Phi_p(\nu)} [h] \big  |  \\
& = \big | \E_{\mu} [h\circ \T_{\mu, p}]  - \E_{\nu}[  h\circ \T_{\nu,p} ]   \big  |\\
& \leq  
\big | \E_{\mu}[ h \circ \T_{\mu, p} ] -   \E_\nu [ h \circ \T_{\mu,p}]\big |~  +~\big  | \E_\nu [h\circ \T_{\mu, p} ]  -  \E_\nu [ h\circ \T_{\nu,p} ]  \big | .
\end{align*}
We just need to bound these two terms. 
For the first term, 
\begin{align*}
\big | \E_{\mu}[ h \circ \T_{\mu, p} ] -   \E_\nu [ h \circ \T_{\mu,p}]\big | 
&   \leq  ||h \circ \T_{\mu, p} ||_{\mathrm{BL}} ~ \BL(\mu ,~ \nu)  \\ 
&   \leq  \max \big( ||h||_{\infty}, ~ ||h||_{\lip} ||\T_{\mu, p} ||_{\lip} \big) ~ \BL(\mu ,~ \nu) \\
&  \leq (1 + \epsilon ||\vv g ||_{\lip})  \BL(\mu ,~ \nu), \ant{by Equatoin~\ref{mrttr}.}
\end{align*}
For the second term, 
\begin{align*}
 \big  | \E_\nu [h\circ \T_{\mu, p} ]  -  \E_\nu [ h\circ \T_{\nu,p} ]  \big |  
 &  \leq \max_{x} \big{|}  h\circ \T_{\mu, p}(x)  - h \circ \T_{\nu,p}(x) \big{|}   \\
 & \leq  || h||_{\lip}  \max_{x} {||} \T_{\mu, p}(x)  - \T_{\nu,p}(x) {||}_2   \\ 
& \leq   \epsilon ||\vv g ||_{\mathrm{BL}}~ \BL (\mu, ~ \nu), \ant{by Equation~\ref{wertr}.}
\end{align*}
Therefore, 
$$
\BL(\Phi_p(\mu), ~ \Phi_p(\nu)) 
\leq (1 +\epsilon || \vv g||_{\lip}  + \epsilon ||\vv g||_{\mathrm{BL}} )~ \BL(\mu, ~ \nu)
\leq (1 +  2\epsilon ||\vv g||_{\mathrm{BL}} )~ \BL(\mu, ~ \nu).
$$
\end{proof}

\subsection{Proof of Theorem~\ref{thm:timephi}}

\begin{proof}
Denote by 
$\mu_\ell = \mu_\ell^\infty$ for notation convenience. 
\begin{align}
& \hspace{0\textwidth}\KL(\mu_{\ell+1}~||~ \nu_p)  - \KL(\mu_\ell ~||~ \nu_p)   \notag \\
&~~ = \KL(\T_{\mu_\ell,p} \mu_{\ell}~||~ \nu_p)  - \KL(\mu_\ell ~||~ \nu_p)   \notag\\
&~~ = \KL( \mu_{\ell}~||~ \T_{\mu_\ell,p}^{-1}\nu_p)  - \KL(\mu_\ell ~||~ \nu_p)  \ant{by Lemma~\ref{lem:klt}} \notag\\
& ~~ = -  \E_{x\sim \mu_\ell}[ \log p(\T_{\mu_\ell,p}(x)) + \log \det(\nabla \T_{\mu_\ell,p}(x)) - \log p(x)].  \label{equ:diffkls}
\end{align}
Note that $\T_{\mu_\ell,p}(x) = x + \epsilon \ffs_{\mu_\ell,p}(x)$.  
We have the follow version of Taylor approximation: 
\begin{align}\label{logpt}
  \log p(x) -  \log p( \T_{\mu_\ell,p}(x))  
\leq - \epsilon \nabla_x  \log p(x)^\top  \ffs_{\mu_\ell,p}(x)   ~+~ \frac{\epsilon^2}{2}  ||\nabla \log p ||_{\lip}   \cdot   || \ffs_{\mu_\ell,p}||_2^2.  
\end{align}
This is because, defining $x_s = x+ s \epsilon\ffs_{\mu_\ell,p}(x)$, $\forall s \in [0,1]$,  
\begin{align*}
& \log p(x) -  \log p( \T_{\mu_\ell,p}(x))  \\
&  =  - \int_{0}^1  \nabla_s \log p( x_s ) \dno s   \\
&  = -  \int_{0}^1  \nabla_x  \log p( x_s)^\top (\epsilon \ffs_{\mu_\ell,p}(x)) \ \dno s   \\
&  =  -  \epsilon \nabla_x  \log p(x)^\top  \ffs_{\mu_\ell,p} (x)  ~-~ \int_{0}^1  (\nabla_x  \log p( x_s) - \nabla_x  \log p(x))^\top  (\epsilon \ffs_{\mu_\ell,p}(x))  \dno s   \\
&  \leq  -   \epsilon \nabla_x  \log p(x)^\top  \ffs_{\mu_\ell,p}(x)   ~+~  \epsilon^2 ||\nabla \log p ||_{\lip}  \cdot    ||\ffs_{\mu_\ell,p}(x)||_2^2   \int_{0}^1  s   \dno s   \\
&  =   - \epsilon \nabla_x  \log p(x)^\top  \ffs_{\mu_\ell,p}(x)   ~+~ \frac{\epsilon^2}{2}  ||\nabla \log p ||_{\lip}\cdot    || \ffs_{\mu_\ell,p}(x)||_2^2.  
\end{align*}
where we used the fundamental theorem of calculus, which holds for weakly differentiable functions \citep[][Theorem 3.60, page 77]{hunter}.
In addition, Take $B =  \nabla \ffs_{\mu_\ell, p}(x)$ in bound \eqref{equ:mainlogdet2} of Lemma~\ref{lem:mainlogdet}, and take 
$\epsilon < 1/(2\rho(B+B^\top))$, we have 
\begin{align}\label{logdet}
\log |\det(\nabla \T_{\mu_\ell,p}(x)) |
& \geq \epsilon~ \trace( \nabla \ffs_{\mu_\ell, p}(x))   - 2 \epsilon^2 {||\nabla \ffs_{\mu_\ell, p}(x)||_F^2} \notag \\
& =  \epsilon~ \nabla \cdot  \ffs_{\mu_\ell, p}(x)    -  2 \epsilon^2 {||\nabla \ffs_{\mu_\ell, p}(x)||_F^2}.  
\end{align}
Combining \eqref{logpt} and \eqref{logdet} gives 
\begin{align*}
\KL(\mu_{\ell+1}~||~ \nu_p)  - \KL(\mu_\ell ~||~ \nu_p)  
& \leq -\epsilon \E_{\mu_\ell}[ \sumstein_p \ffs_{\mu_\ell, p}]  ~+~\Delta \\
& = -\epsilon \S(\mu_\ell ~||~ \nu_p)^2   ~+~ \Delta, \\
\end{align*}
where $\Delta $ is a residual term: 
$$
\Delta = {\epsilon^2} \E_{x\sim \mu_\ell}\bigg [ \frac{1}{2}||\nabla \log p ||_{\lip}  \cdot ||\ffs_{\mu_\ell,p}(x) ||_2^2 + 2 {||\nabla \ffs_{\mu_\ell, p}(x)||_F^2}\bigg ] 
$$
We need to bound $||\ffs_{\mu_\ell,p}(x) ||_2$ and $||\nabla \ffs_{\mu_\ell,p}(x) ||_F$. This can be done using the reproducing property: 
let $\ffs_{\mu_\ell,p} = [\phi_1, \cdots, \phi_d]^\top$; recall that $\phi_i \in \H_0$ and $\ffs_{\mu_\ell,p} \in \H = \H_0^d$, then 
\begin{align*}
\phi_i(x) = \la \phi_i(\cdot), ~ k(x, \cdot)\ra_{\H_0}, &&
\partial_{x_j} \phi_i(x) = \la \phi_i(\cdot), ~ \partial_{x_j} k(x, \cdot)\ra_{\H_0}, ~~~~~~~\forall i, j = 1,\ldots, d, ~~ x \in X. 
\end{align*}
Also note that $||\ffs_{\mu_\ell,p} ||_\H^2 = \sum_{i=1}^d ||\phi_i||_{\H_0}^2 = \S(\mu_\ell~||~ \nu_p)^2$, we have by Cauchy-Swarchz inequality, 
\begin{align*}
||\ffs_{\mu_\ell, p}(x)||_2^2  
& = \sum_{i=1}^d \phi_i (x)^2    \\
&  = \sum_{i=1}^d  (\la  k(x, \cdot), ~ \phi_i(\cdot)\ra_{\H_0})^2 \\
& \leq \sum_i ||k(x,\cdot) ||_{\H_0}^2 \cdot  || \phi_i ||_{\H_0}^2 \\
& =  k(x,x)\cdot || \ffs_{\mu_\ell,p}||_{\H}^2 \\ 
& =  k(x,x)\cdot \S(\mu_\ell ~||~ \nu_p)^2,  
\end{align*}
and 
\begin{align}
||\nabla \ffs_{\mu_\ell, p}(x)||_F^2 
& = \sum_{ij} \partial_{x_j} \phi_i(x)^2 \notag \\
& = \sum_{ij}  (\la\partial_{x_j} k(x, \cdot)  ,~  \phi_i(\cdot)\ra_{\H_0})^2 \notag \\
&\leq \sum_{ij}  ||\partial_{x_j} k(x, \cdot) ||_{\H_0}^2 \cdot ||  \phi_i ||^2_{\H_0} \notag\\ 
&=  \sum_{ij}  \partial_{x_j,x'_j} k(x, x') |_{x=x'}   \cdot ||  \phi_i ||^2_{\H_0} \notag \\
&=   \nabla_{xx'} k(x, x)    \cdot ||  \ffs_{\mu_\ell,p} ||^2_{\H}   \notag \\
&=   \nabla_{xx'} k(x, x)   \cdot \S(\mu_\ell ~||~ \nu_p)^2.  \label{equ:fbound}\\
\end{align}
Therefore, 
\begin{align*}
\Delta 
&  \leq \epsilon^2  ~ \S(\mu_\ell ~||~\nu_p)^2 
\bigg 
(
\frac{1}{2} \E_{x\sim \mu_\ell} [ ||\nabla  \log p ||_{\lip} k(x,x) +  2{\nabla_{xx'}k(x, x)} ]
\bigg)  \\
& = \epsilon^2 R ~ \S(\mu_\ell ~||~\nu_p)^2. 
\end{align*}
This gives 
\begin{align*}
\KL(\mu_{\ell+1}~||~ \nu_p)  - \KL(\mu_\ell ~||~ \nu_p)  
& \leq -\epsilon ~(1 - {\epsilon} R)~ \S(\mu_\ell ~||~  \nu_p)^2.  
\end{align*}
\end{proof}

\begin{lem} \label{lem:mainlogdet}
Let $B$ be a square matrix and 
$||B ||_{F} = \sqrt{\sum_{ij} b_{ij}^2}$ its Frobenius norm.  
Let $\epsilon$ be a positive number that satisfies $0 \leq \epsilon < \frac{1}{\rho(B + B^\top)}$, where $\rho(\cdot)$ denotes the spectrum radius.  Then $I + \epsilon (B + B^\top)$ is positive definite, and 
\begin{align}\label{equ:mainlogdet}
\log | \det(I + \epsilon B) |   \geq {\epsilon} \trace(B) -  {\epsilon^2} \frac{||B||^2_F}{1 - \epsilon \rho(B + B^\top)}. 
\end{align}
Therefore, take an even smaller $\epsilon$ such that $0\leq \epsilon \leq  \frac{1}{2 \rho(B + B^\top)}$, we get
\begin{align}\label{equ:mainlogdet2}
\log | \det(I + \epsilon B) |   \geq {\epsilon} \trace(B) -  2 {\epsilon^2} {||B||^2_F}. 
\end{align}
\end{lem}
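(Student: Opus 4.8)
The plan is to reduce the determinant of the non-symmetric matrix $I+\epsilon B$ to that of a \emph{symmetric} matrix by passing to the square $|\det(I+\epsilon B)|^2 = \det\!\big((I+\epsilon B)^\top(I+\epsilon B)\big)$, where everything is governed by the symmetric part $A \coloneqq B + B^\top$. First I would dispose of the positive-definiteness claim: since $A$ is symmetric, its eigenvalues $\mu_1,\dots,\mu_d$ are real with $|\mu_i|\le \rho(A)$, and the hypothesis $\epsilon < 1/\rho(A)$ forces $1+\epsilon\mu_i \ge 1-\epsilon\rho(A) > 0$, so $I+\epsilon A \succ 0$. Writing $M = I+\epsilon B$ and expanding, $M^\top M = I + \epsilon A + \epsilon^2 B^\top B$; because $B^\top B \succeq 0$ we get $M^\top M \succeq I + \epsilon A \succ 0$, which as a byproduct shows $M$ is invertible, so $\log|\det M|$ is finite and the inequality is non-vacuous.

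Next I would invoke monotonicity of the determinant on the positive-definite cone (the Loewner order): from $0 \prec I+\epsilon A \preceq M^\top M$ we obtain $\det(M^\top M) \ge \det(I+\epsilon A)$. Taking logs and using $2\log|\det M| = \log\det(M^\top M)$, together with the spectral decomposition of the symmetric $A$, reduces the claim to the scalar estimate
$$
2\log|\det M| \;\ge\; \log\det(I+\epsilon A) \;=\; \sum_{i=1}^d \log(1+\epsilon\mu_i).
$$
At this point the matrix problem has been completely scalarized.

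Finally I would apply the one-variable bound $\log(1+x) \ge x - \tfrac{1}{2}\,\tfrac{x^2}{1-|x|}$, valid for $|x|<1$ and obtained by bounding the Taylor tail $\sum_{k\ge 2}\tfrac{(-1)^{k-1}}{k}x^k$ in absolute value by $\tfrac12\sum_{k\ge2}|x|^k = \tfrac12\,\tfrac{|x|^2}{1-|x|}$. Plugging $x=\epsilon\mu_i$ and using $1-\epsilon|\mu_i| \ge 1-\epsilon\rho(A)$ gives $\log(1+\epsilon\mu_i) \ge \epsilon\mu_i - \tfrac{\epsilon^2\mu_i^2}{2(1-\epsilon\rho(A))}$. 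Summing and substituting $\sum_i\mu_i = \trace(A) = 2\,\trace(B)$ and $\sum_i \mu_i^2 = ||A||_F^2 \le (||B||_F+||B^\top||_F)^2 = 4\,||B||_F^2$, then dividing by $2$, yields exactly \eqref{equ:mainlogdet}. The cleaner bound \eqref{equ:mainlogdet2} follows by restricting to $\epsilon \le 1/(2\rho(A))$, so that $1-\epsilon\rho(A)\ge \tfrac12$ and the denominator contributes at most a factor $2$.

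The main obstacle is conceptual rather than computational: because $B$ need not be symmetric, one cannot directly factor $\det(I+\epsilon B)=\prod_i(1+\epsilon\lambda_i)$ and expand each logarithm, since the eigenvalues $\lambda_i$ of $B$ may be complex and, crucially, only $\rho(B+B^\top)$ — not $\rho(B)$ — is controlled by the hypotheses. The squaring step $|\det M|^2 = \det(M^\top M)$ is precisely the device that replaces the uncontrolled spectrum of $B$ by the real spectrum of its symmetric part $A$, at the mild cost of the harmless extra term $\epsilon^2 B^\top B \succeq 0$; the rest is bookkeeping with the scalar logarithm inequality and the Frobenius-norm estimate $||A||_F \le 2||B||_F$.
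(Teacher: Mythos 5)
Your proof is correct and follows essentially the same route as the paper's: squaring the determinant via $\det(M^\top M)$ to pass to the symmetric part $A = B + B^\top$, dropping the positive semi-definite term $\epsilon^2 B^\top B$ by determinant monotonicity, scalarizing through the eigenvalues of $A$, and finishing with the trace and Frobenius-norm bookkeeping $\trace(A) = 2\trace(B)$, $\|A\|_F \le 2\|B\|_F$. The only cosmetic difference is that you obtain the scalar estimate $\log(1+x) \ge x - \tfrac{x^2}{2(1-|x|)}$ from the Taylor-tail geometric-series bound, whereas the paper derives the same inequality from an integral-remainder representation.
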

\begin{proof}
When $\epsilon < \frac{1}{\rho(B + B^\top)}$, we have $\rho(I + \epsilon (B + B^\top)  )  \geq  1 - \epsilon \rho(B + B^\top) > 0$, so $I  + \epsilon (B + B^\top)$ is positive definite.  

By the property of matrix determinant, we have 
\begin{align}
\log | \det(I + \epsilon B) | 
& =\frac{1}{2} \log \det((I + \epsilon B)(I + \epsilon B)^\top) \notag\\
& =\frac{1}{2} \log \det(I + \epsilon( B + B^\top)  + \epsilon^2 B B^\top)  \notag\\
& \geq \frac{1}{2} \log \det(I + \epsilon( B + B^\top)),  \label{logB}
\end{align}
where \eqref{logB} holds because both $I + \epsilon( B + B^\top) $ and $\epsilon^2 B B^\top$ are positive semi-definite. 

Let $A = B + B^\top$. We can establish  
\begin{align}\label{logdetA}
\log \det(I + \epsilon A) \geq \epsilon \trace(A)  - \frac{\epsilon^2}{2}  \frac{||A||_{F}^2}{1 - \epsilon \rho(A)}, 
\end{align}
which holds for any symmetric matrix $A$ and $0\leq \epsilon < 1/\rho(A)$. 
This is because, assuming $\{\lambda_i\}$ are the eigenvalues of $A$, 
\begin{align*}
\log \det(I + \epsilon A)  - \epsilon \trace(A) 
& = \sum_i [ \log (1 + \epsilon \lambda_i) - \epsilon  \lambda_i] \\
& = \sum_i [ \int_0^1 \frac{\epsilon \lambda_i }{1+s \epsilon \lambda_i } \dno s - \epsilon  \lambda_i] \\
& = - \sum_i \int_0^1 \frac{s \epsilon^2 \lambda_i^2}{1+s \epsilon \lambda_i } \dno s  \\
&\geq - \sum_i  \frac{\epsilon^2 \lambda_i^2}{1- \epsilon \max_i |\lambda_i| }  \int_0^1 s \dno s  \\
&\geq - \sum_i  \frac{ \epsilon^2 \lambda_i^2}{2(1- \epsilon \max_i |\lambda_i|) } \\
& =  -\frac{ \epsilon^2}{2}  \frac{||A||_{F}^2}{1 - \epsilon \rho(A)}.   
\end{align*}
Take $A = B + B^\top$ in \eqref{logdetA} and combine it with \eqref{logB}, we get
\begin{align*}
\log | \det(I + \epsilon B) | 
& \geq \frac{1}{2} \log \det(I + \epsilon( B + B^\top)) \\
& \geq \frac{\epsilon}{2} \trace(B + B^\top) - \frac{\epsilon^2}{4} \frac{||B+B^\top||^2_F}{1 - \epsilon \rho(B + B^\top)} \\
& \geq {\epsilon} \trace(B) -  {\epsilon^2} \frac{||B||^2_F}{1 -  \epsilon \rho(B + B^\top)}, 
\end{align*}
where we used the fact that $\trace(B) = \trace(B^\top)$ and $||B+B^\top||_F \leq ||B||_F + ||B^\top||_F = 2||B ||_F$. 
\end{proof}

\newcommand{\Df}{\mathrm{D}_f}
\begin{lem}\label{lem:klt}
Let $\T$ be a one-to-one map, and $\mu$ and $\nu$ two probability measures. 
We have 
$$
\KL(\T \mu ~||~\nu) = \KL(\mu~||~ \nu), 
$$
given that the KL divergence between $\mu$ and $\nu$ exists. 
\end{lem}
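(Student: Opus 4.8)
The plan is to reduce the identity to the measure-theoretic change-of-variables rule for Radon--Nikodym derivatives, transferring the transformation from the first argument onto the reference measure while leaving $\nu$ itself in place (rather than pushing $\nu$ forward, which would give the symmetric invariance $\KL(\T\mu ~||~ \T\nu) = \KL(\mu ~||~ \nu)$). First I would settle the absolute-continuity bookkeeping: since $\T$ is one-to-one, $\T\mu \ll \nu$ holds if and only if $\mu \ll \T^{-1}\nu$, so that $\KL(\T\mu ~||~ \nu)$ is finite exactly when $\KL(\mu ~||~ \T^{-1}\nu)$ is, and on the set where these densities exist I may write $\KL(\T\mu ~||~ \nu) = \E_{\T\mu}\big[\log \tfrac{d(\T\mu)}{d\nu}\big]$. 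Applying the defining property of the pushforward, $\E_{\T\mu}[\Phi] = \E_{\mu}[\Phi\circ\T]$, to $\Phi = \log\tfrac{d(\T\mu)}{d\nu}$ pulls the outer integral back onto $\mu$ and gives $\KL(\T\mu ~||~ \nu) = \E_{\mu}\big[\log\big(\tfrac{d(\T\mu)}{d\nu}\circ\T\big)\big]$.

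The key step is the cocycle identity $\tfrac{d(\T\mu)}{d\nu}\circ\T = \tfrac{d\mu}{d(\T^{-1}\nu)}$, which I would verify by checking that the left-hand function acts as the Radon--Nikodym derivative of $\mu$ with respect to $\T^{-1}\nu$. Writing $g = \tfrac{d(\T\mu)}{d\nu}$, for every measurable $A$ the pushforward integration rule together with the set identity $\{\,\T^{-1}y \in A\,\} = \{\,y \in \T A\,\}$ yields $\int_A (g\circ\T)\,d(\T^{-1}\nu) = \int (g\circ\T)(\T^{-1}y)\,\mathbf{1}_{\T A}(y)\,d\nu(y) = \int_{\T A} g\,d\nu = (\T\mu)(\T A) = \mu(A)$, so $g\circ\T = \tfrac{d\mu}{d(\T^{-1}\nu)}$ holds $\T^{-1}\nu$-almost everywhere, hence $\mu$-almost everywhere. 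Substituting into the previous display gives
\[
\KL(\T\mu ~||~ \nu) = \E_{\mu}\big[\log(g\circ\T)\big] = \E_{\mu}\big[\log \tfrac{d\mu}{d(\T^{-1}\nu)}\big] = \KL(\mu ~||~ \T^{-1}\nu),
\]
which is precisely the form used in the proof of Theorem~\ref{thm:timephi}, where $\nu = \nu_p$ and the reference on the right is $\T^{-1}_{\mu_\ell,p}\nu_p$; the original $\nu$ is never pushed forward.

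The hard part will be the coordinate-free justification of the cocycle rule: I want to avoid assuming densities against Lebesgue measure and instead argue purely through Radon--Nikodym derivatives and the pushforward integration identity, being careful that an equality holding only almost everywhere still suffices inside the expectation. A secondary point is the degenerate regime: if $\T\mu \not\ll \nu$ (equivalently $\mu \not\ll \T^{-1}\nu$) both sides are $+\infty$ by the equivalence noted at the outset, so the identity holds trivially there, and the measurability of $\T$ and $\T^{-1}$ is all that is needed to make the null-set arguments go through.
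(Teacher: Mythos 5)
Your proof is correct, and it establishes exactly what the paper's own proof establishes: the asymmetric identity $\KL(\T\mu~\|~\nu) = \KL(\mu~\|~\T^{-1}\nu)$, which is the form actually invoked in the proof of Theorem~\ref{thm:timephi} (the displayed equation in the lemma statement, with $\nu$ unchanged on the right, is a typo that you correctly read past). However, your route is genuinely different from the paper's. The paper proves the result for general $f$-divergences via the variational (convex-conjugate) representation $\sup_{g}\{\E_{\mu}[g] - \E_{\nu}[f^*(g)]\}$: since $\T$ is invertible, the substitution $\tilde g = g\circ\T$ is a bijection on the class of test functions, and $\E_{\nu}[f^*(\tilde g\circ\T^{-1})] = \E_{\T^{-1}\nu}[f^*(\tilde g)]$, which transfers the transform onto the reference measure in two lines. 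You instead work directly with Radon--Nikodym derivatives, proving the cocycle identity $\frac{d(\T\mu)}{d\nu}\circ\T = \frac{d\mu}{d(\T^{-1}\nu)}$ by verifying the defining property on measurable sets. What the paper's approach buys is brevity and automatic generality over all $f$-divergences, with the degenerate (mutually singular) case absorbed into the supremum; but it leans on the variational representation theorem, which is itself nontrivial and holds under its own regularity caveats. What your approach buys is self-containedness at the level of basic measure theory, an explicit account of where injectivity and bimeasurability of $\T$ enter (via $\mu(A) = (\T\mu)(\T A)$ and the null-set correspondence $\T\mu\ll\nu \iff \mu\ll\T^{-1}\nu$), and an explicit treatment of the $+\infty$ regime that the paper leaves implicit; moreover, your cocycle identity extends the argument to $f$-divergences just as easily, by replacing $\log$ with $f$ and integrating against the reference measure. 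Both proofs are valid; yours is arguably the more careful one for the KL case as used downstream.
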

\begin{proof}
We prove this for $f$-divergence in general, which includes KL divergence as a special case. 
Given a convex function $f$ such that $f(1) = 0$, the $f$-divergence is defined 
$$
\Df (\mu~|| ~\nu) = \E_{\nu} [f(\frac{\dno \mu}{\dno \nu})] . 
$$
Assume $f^*$ is the convex conjugate of $f$, we have a variational representation for $f$-divergence:  
\begin{align*}
\Df(\mu ~||~ \nu) = \sup_{g} \big\{ \E_{\mu}[g(x)] - \E_{\nu}[f^*(g(x))]  \big\}, 
\end{align*}
where $g$ is over the set of all measurable functions. Therefore, we have 
\begin{align*}
\Df(\T \mu ~||~ \nu) 
&  =  \sup_{g} \big\{ \E_{\mu}[g \circ T(x)] - \E_{\nu}[f^*(g(x))]  \big\}  \\
&  =  \sup_{\tilde g} \big\{ \E_{\mu}[\tilde g(x)] - \E_{\nu}[f^*(\tilde g\circ \T^{-1}(x))]  \big\}  \ant{Define $\tilde g = g \circ \T$.}\\
& = \Df(\mu ~||~ \T^{-1} \nu). 
\end{align*}
\end{proof}

\subsection{Proof of Fokker-Planck Equation~\eqref{equ:vlasov00}}
\label{sec:timelimit}
\begin{proof}
Recall that $\T_{\mu,p}(x)= x + \epsilon  \ffs_{\mu,p}(x)$ and we denote by $q$ the density of measure $\mu$. 
Assume $\epsilon$ is sufficiently small so that $\nabla \T_{\mu_p}(x) =I + \epsilon \nabla \ffs_{\mu,p}(x)$ is positive definite (See Lemma~\ref{lem:mainlogdet}).  
By the implicit function theorem, we have 
$$\T_{\mu,p}^{-1}(x)= x - \epsilon  \ffs_{\mu,p}(x) + \od(\epsilon).$$ Therefore
We have 
\begin{align*}
\log q'(x)  &  = \log q(\T_{\mu,p}^{-1}(x)) + \log \det(\nabla_x \T_{\mu,p}^{-1}(x) )   \\
& = \log q(x  - \epsilon \cdot \ff_{\mu,p}^*(x) )   + \log \det(I  - \epsilon \nabla_x  \ff_{\mu,p}^*(x)  )   + \od(\epsilon) \\  
& = \log q(x) - \epsilon  \nabla_{x_i} \log q (x) ^\top \ff_{\mu,p}^*(x)  -  \epsilon q(x) \cdot \trace( \nabla_x  \ff_{\mu,p}^*(x) ) + \od(\epsilon)  \\ 
& = \log q(x) - \epsilon \sumstein_q  \ffs_{\mu,p}(x)  + \od(\epsilon). 
\end{align*} 
Therefore, 
\begin{align*}
\frac{q'(x) - q(x)}{\epsilon}
& = \frac{q ({\log q(x) - \log q(x)}) }{\epsilon} +  \od(\epsilon)  \\
& =   - q(x) \sumstein_q  \ff_{q_\ell,p}^*(x)   + \od(\epsilon) \\
& =   - \nabla \cdot  (\ff_{q_\ell,p}^*(x)  q_\ell(x) ) + \od(\epsilon). 
\end{align*} 
Taking $\epsilon \to 0$ gives the result. 
\end{proof}

\subsection{Proof of Theorem~\ref{thm:fungrad}}
\begin{proof}
Since $q' = q + q f \dt$ is equivalent to transforming the variable by $\T(x) = x + \vv\psi_{q,f}\dt$,
the corresponding change on KL divergence is 
\begin{align*}
F(q + qf\dt) 
& = F(q) + \E_{q}[\sumstein_p \vv\psi_{q,f}] \dt \\
& = F(q) +  \la \ffs_{q,p}, ~ \vv\psi_{q,f} \ra_{\Hd}   \dt \\
& = F(q) +  \la \nabla \cdot (\ffs_{q,p}q) , ~\nabla \cdot  (\vv\psi_{q,f}q) \ra_{q\H_q}   \dt \\
& = F(q) +  \la \nabla \cdot (\ffs_{q,p}q) , ~ qf \ra_{q\H_{q}}   \dt 
\end{align*}
This proves that 
 $\nabla \cdot (\ffs_{q,p}q)$ is the covariant functional gradient. 
\end{proof}

\bibliography{bibrkhs_stein}
\bibliographystyle{icml2017}

\end{document}